\newcommand{\wf}[1]{\widehat{f}_{#1}}
\newcommand\ns[1]{\textcolor{blue}{NS: #1}}
\newcommand{\ex}{\mathop{\mathbb{E}}\limits}
\newenvironment{itemize*}%
{\begin{itemize}[leftmargin=*,topsep=0pt]%
		\setlength{\itemsep}{0pt}%
		\setlength{\parskip}{0pt}}%
	{\end{itemize}}
\newenvironment{enumerate*}%
{\begin{enumerate}[leftmargin=*,topsep=0pt]%
		\setlength{\itemsep}{0pt}%
		\setlength{\parskip}{0pt}}%
	{\end{enumerate}}
\def\eqref#1{equation~\ref{#1}}
\def\1{\bm{1}}
\DeclareMathAlphabet{\mathsfit}{\encodingdefault}{\sfdefault}{m}{sl}
\SetMathAlphabet{\mathsfit}{bold}{\encodingdefault}{\sfdefault}{bx}{n}
\newcommand{\E}{\mathbb{E}}
\newcommand{\R}{\mathbb{R}}
\DeclareMathOperator*{\argmin}{arg\,min}
\DeclareMathOperator{\sign}{sign}
\newtheorem{theorem}{Theorem}[section]
\newtheorem{corollary}[theorem]{Corollary}
\newtheorem{lemma}[theorem]{Lemma}
\newtheorem{definition}{Definition}[section]
\newtheorem{example}{Example}
\begin{document}

\title{Understanding  Influence Functions and Datamodels\\via Harmonic Analysis}

\author{Nikunj Saunshi \qquad Arushi Gupta \qquad Mark Braverman \qquad Sanjeev Arora\\\vspace{-0.05in}
  \normalsize{
  \texttt{\{nsaunshi, arushig, mbraverm, arora\}@cs.princeton.edu}\\
  Department of Computer Science, Princeton University
  }
}

\maketitle

\begin{abstract}
\looseness-1Influence functions estimate effect of individual data points on predictions of the model on test data and were adapted to deep learning in \cite{koh2017understanding}. They have been used for detecting data poisoning, detecting helpful and harmful examples, influence of groups of datapoints, etc. Recently, \cite{ilyas2022datamodels} introduced a linear regression method they termed {\em datamodels} to predict the effect of training points on outputs on test data.
The current paper seeks to provide a better theoretical understanding of such interesting empirical phenomena. The primary tool is harmonic analysis and the idea of {\em noise stability}.  Contributions include: (a) Exact characterization of the learnt datamodel in terms of Fourier coefficients. (b) An efficient method to estimate the residual error and quality of the optimum linear datamodel without having to train the datamodel. (c) New insights into when influences of groups of datapoints may or may not add up linearly.
\end{abstract}

\section{Introduction}
\label{sec:intro}

\looseness-1It is often of great interest to quantify how the presence or absence of a particular training data point  affects the trained model's performance on test data points. Influence functions is a classical idea  for this \citep{jaeckel1972infinitesimal,hampel1974influence,cook1977detection} that has recently been adapted to modern deep models and large datasets \citet{koh2017understanding}.
 Influence functions have been applied to explain predictions and produce confidence
intervals \citep{schulam2019can}, investigate model bias \citep{brunet2019understanding,wang2019repairing}, estimate Shapley values \citep{jia2019towards,ghorbani2019data}, 
improve human trust \citep{zhou2019effects}, and craft data poisoning attacks \citep{koh2019accuracy}. 

Influence actually has different formalizations.
The classic calculus-based estimate (henceforth referred to as {\em continuous influence}) involves conceptualizing training loss as a weighted sum over training datapoints, where the weighting of a particular datapoint $z$ can be varied infinitesimally.
Using gradient and Hessian one obtains an expression for the rate of change in test error (or other functions) of $z'$ with respect to (infinitesimal) changes to weighting of $z$.
Though the estimate is derived  only for infinitesimal change to the weighting of $z$ in the training set, in practice it has been employed also as a reasonable estimate for the {\em discrete} notion of influence, which is the effect of completely adding/removing the data point from the training dataset \citep{koh2017understanding}.
Informally speaking, this discrete influence is defined as $f(S \cup \{i\}) - f(S)$ where $f$ is some function of the test points, $S$ is a training dataset and $i$ is the index of a training point. (This can be noisy, so several papers use {\em expected influence} of $i$  by taking the {\em expectation} over random choice of $S$ of a certain size; see \Cref{sec:fourier}.)
\citet{koh2017understanding} as well as subsequent papers have used continuous influence to estimate the effect of decidedly non-infinitesimal changes to the dataset, such as changing the training set by  adding or deleting entire groups of  datapoints  \citep{koh2019accuracy}. Recently \citet{bae2022if} show mathematical reasons why this is not well-founded, and give a clearer explanation (and alternative implementation) of  Koh-Liang style estimators.


\looseness-1Yet another idea related to influence functions is {\em linear datamodels} in ~\cite{ilyas2022datamodels}.
By training many models on subsets of $p$ fraction of datapoints in the training set, the authors show that some interesting measures of test error (defined using logit values) behave as follows: the measure $f(x)$ is well-approximable as a (sparse) linear expression $\theta_0 + \sum_i \theta_i x_i$, where $x$ is a binary vector denoting a sample of $p$ fraction of training datapoints, with $x_i=1$ indicating presence of $i$-th training point and $x_i =-1$ denoting absence.
The coefficients $\theta_i$ are estimated via lasso regression.
The surprise here is that $f(x)$ ---which is the result of deep learning on dataset $x$---is well-approximated by $\theta_0 + \sum_i \theta_i x_i$. The authors note that the $\theta_i$'s can be viewed as heuristic estimates for the discrete influence of the $i$th datapoint.

The current paper seeks to provide better theoretical understanding of above-mentioned phenomena concerning discrete influence functions. At first sight this quest appears difficult.
The calculus definition of influence functions (which as mentioned is also used in practice to estimate the discrete notions of influence) involves Hessians and gradients evaluated on the trained net, and thus one imagines that any explanation for properties of influence functions must  await better mathematical understanding of datasets, net architectures, and training algorithms.

Surprisingly, we show that  the explanation for many observed properties turns out to be fairly generic.
Our chief technical tool is harmonic analysis, and especially theory of {\em noise stability} of functions (see \citet{o2014analysis} for an excellent survey).

\subsection{Our Conceptual framework (Discrete Influence)} \label{subsec:framework}
Training data points are numbered $1$ through $N$, but the model is being trained on a random subset of data points, where  each data point is included independently in the subset with probability $p$. (This is precisely the setting in linear datamodels.)
For notational ease and consistency with harmonic analysis, we denote this subset by $x \in \{-1, +1\}^N$ where $+1$ means the corresponding data point was included.
We are interested in some quantity $f(x)$ associated with the trained model on one or more test data points. Note $f$ is a probabilistic function of $x$ due to stochasticity in deep net training -- SGD, dropout, data augmentation etc. -- but one can average over the stochastic choices and think of $f$ as deterministic function 
$f \colon \{\pm 1\}^N \to \R$. (In practice, this means we estimate $f(x)$ by repeating the training on $x$, say, $10$ to $50$ times.) 

This scenario is close to  classical study of boolean functions via harmonic analysis, except our function is real-valued. Using those tools we provide the following new mathematical understanding:
\begin{enumerate}[leftmargin=*]
\item  We give reasons for existence of {\em datamodels} of \citet{ilyas2022datamodels},  the phenomenon that functions related to test error are well-approximable by a linear function $\theta_0 + \sum_i \theta_i x_i$. See Section~\ref{subsec:understandlinear}. 
\item \Cref{sec:fourier} gives exact characterizations of the $\theta_i$'s for data models with/without regularization. (Earlier, \cite{ilyas2022datamodels}  noted this for a special case: $p=0.5$, $\ell_{2}$ regularization)
\item Using our framework,  we give a new algorithm to estimate the degree to which a test function $f$ is well-approximated by a linear datamodel, {\em without having to train the datamodel per se}. See \Cref{sec:residual_estimation}, where our method needs only  $\mathcal{O}(1/\epsilon^{3})$ samples instead of $\mathcal{O}(N/\epsilon^{2})$.
    \looseness-1\item  We study {\em group influence}, which quantifies the effect of adding or deleting a set $I$ of datapoints to $x$. \citet{ilyas2022datamodels} note that this can often be well-approximated by linearly adding the individual influences of points in $I$. Section~\ref{sec:group_influence} clarifies simple settings where linearity would fail, by a factor exponentially large in $|I|$, and also discusses potential reasons for the observed linearity. 
    \end{enumerate}

\subsection{Other related work} 


\citet{narasimhan2015learnability} investigate when influence is PAC learnable. \citet{basu2020second} use second order influence functions and find they make better predictions than first order influence functions. \citet{cohen2020detecting} use influence functions to detect adversarial examples. \citet{kong2021resolving} propose an influence based re-labeling function that can relabel harmful examples to improve generalization instead of just discarding them.   \citet{zhang2021rethinking} use {\em Neural Tangent Kernels} to understand influence functions rigorously for highly overparametrized nets. 

\looseness-1\citet{pruthi2020estimating} give another notion of influence by tracing the effect of data points on the loss throughout gradient descent. \cite{chen2020multi} define multi-stage influence functions to trace influence all the way back to pre-training to find which samples were most helpful during pre-training. 
\citet{basu2020influence} find that influence functions are fragile, in the sense that the quality of influence estimates depend on the architecture and training procedure. \citet{alaa2020discriminative} use higher order influence functions to characterize uncertainty in a jack-knife estimate. \citet{teso2021interactive} introduce Cincer, which uses influence functions to identify suspicious pairs of examples for interactive label cleaning. 
\citet{rahaman2019spectral} use harmonic analysis to decompose a neural network into a piecewise linear Fourier series, thus finding that neural networks exhibit spectral bias.

Other instance based interpretability techniques include Representer Point Selection \citep{yeh2018representer}, Grad-Cos \citep{charpiat2019input}, Grad-dot \citep{hanawa2020evaluation}, MMD-Critic \citep{kim2016examples}, and unconditional counter-factual explanations \citep{wachter2017counterfactual}. 

Variants on influence functions have also been proposed, including those using Fisher kernels \citep{khanna2019interpreting}, tricks for faster and more scalable inference \citep{guo2021fastif,schioppa2022scaling}, and identifying relevant training samples with relative influence \citep{barshan2020relatif}. 
\\
Discrete influence played a prominent role in the surprising discovery of {\em long tail phenomenon} in \citet{feldman2020does,feldman2020neural}: the experimental finding that in large datasets like ImageNet, a significant fraction of training points are {\em atypical}, in the sense that the model does not easily learn to classify them correctly if the point is  removed from the training set.
\section{Harmonic analysis, influence functions and datamodels}
\label{sec:fourier}

In this section we introduce notations for the standard harmonic analysis for functions on the hypercube \citep{o2014analysis}, and establish connections between the corresponding fourier coefficients, discrete influence of data points and linear datamodels from \citet{ilyas2022datamodels}.

\subsection{Preliminaries: harmonic analysis} 
\label{sec:prelim}

\looseness-1In the conceptual framework of Section~\ref{subsec:framework}, let $[N] := \{1,2,3,...,N\}$.
Viewing  $f \colon \{\pm 1\}^{N} \to \R$ as a  vector  in $\R^{2^N}$, for any distribution $\mathcal{D}$ on $\{\pm1\}^{N}$, the set of all such functions can be treated as a vector space with inner product defined as $\langle f, g\rangle_{\mathcal{D}} = \E_{x\sim\mathcal{D}}[f(x)g(x)]$, leading to a norm defined as $\|f\|_{\mathcal{D}} = \sqrt{\E_{x\sim\mathcal{D}}[f(x)^2]}$.
Harmonic analysis involves identifying special orthonormal bases for this vector space.
We are interested in $f$'s values at or near $p${\em-biased} points $x \in \{\pm 1 \}^N$, where $x$ is viewed as a random variable whose each coordinate is independently set to $+1$ with probability $p$. We denote this distribution as $\mathcal{B}_{p}$.
Properties of $f$ in this setting are best studied using the {\em orthonormal basis} functions $\{\phi_S : S \subseteq [N]\}$ defined as 
    $\phi_{S}(x) = \prod_{i\in S} \left(\frac{x_{i} - \mu}{\sigma}\right)$,
where $\mu = 2p-1$ and $\sigma^{2} = 4p(1-p)$ are the mean and variance of each coordinate of $x$.
Orthonormality implies that $\E_{x}[\phi_{S}(x)] = 0$ when $S\neq \emptyset$ and $\langle\phi_{S}, \phi_{S'}\rangle_{\mathcal{B}_{p}} = \mathbbm{1}_{S = S'}$.
Then every $f \colon [N] \to \R$ can be expressed as $f = \sum_{S \subseteq [N]} \wf{S} \phi_S$.
Our discussion will often refer to $\wf{S}$'s as ``Fourier'' coefficients of $f$, when the orthonormal basis is clear from context.
This also implies Parseval's identity: $\sum_S \wf{S}^2 = \|f\|_{\mathcal{B}_{p}}^2$.
For any vector $\bm{z}\in\R^{d}$, we denote $\bm{z}_{i}$ to denote its $i$-th coordinate (with 1-indexing).
For a matrix $\bm{A}\in\R^{d\times r}$, $\bm{A}_{i,:}\in\R^{r}$ and $\bm{A}_{:,j}\in\R^{d}$ denote its $i$-th row and $j$-th column respectively.
We use $\|\cdot\|$ to denote Euclidean norm when not specified.

\subsection{Influence functions}

We use the notion of influence of a single point from \citet{feldman2020neural,ilyas2022datamodels}.
Influence of the $i$-th coordinate on $f$ at $x$ is defined as
$\text{Inf}_{i}(f(x)) = f(x|_{i \rightarrow 1}) - f(x |_{i \rightarrow -1})$, where $x$ is sampled as a $p$-biased training set and $x|_{i \rightarrow 1}$ is $x$ with the $i$-th coordinate set to $1$.
\begin{restatable}[Individual influence]{proposition}{individualinfluence}
\label{prop:influencei}
    The ``leave-one-out'' influences satisfy the following:
    \begin{align}
        \text{Inf}_{i}(f(x))
        &= \frac{2}{\sigma}\sum_{S \ni i} \wf{S}\phi_{S\setminus \{i\}}(x),~~~
        \text{Inf}_{i}(f) \coloneqq  \E_x[\text{Inf}_{i}(f(x))]
        = \frac{2}{\sigma}\wf{\{i\}}.
        \label{eq:influencei}
    \end{align}
\end{restatable}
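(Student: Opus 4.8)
The plan is to expand $f$ in the orthonormal basis $\{\phi_S\}$ and exploit the fact that flipping the $i$-th coordinate only affects those basis functions $\phi_S$ with $i \in S$. Writing $f = \sum_{S \subseteq [N]} \wf{S}\, \phi_S$ and substituting into $\text{Inf}_i(f(x)) = f(x|_{i\to 1}) - f(x|_{i\to -1})$, linearity of the expansion lets me treat each $\phi_S$ separately. The first observation is that $\phi_S(x) = \prod_{j\in S}\frac{x_j - \mu}{\sigma}$ does not involve $x_i$ when $i \notin S$; for such $S$ the two evaluations $\phi_S(x|_{i\to 1})$ and $\phi_S(x|_{i\to -1})$ coincide, so the term cancels in the difference. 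Hence only the sets $S \ni i$ survive.

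For a set $S \ni i$, I would factor out the $i$-th coordinate as $\phi_S(x) = \frac{x_i - \mu}{\sigma}\, \phi_{S\setminus\{i\}}(x)$, where $\phi_{S\setminus\{i\}}$ is genuinely independent of $x_i$. Evaluating the leading factor at the two endpoints and using $\mu = 2p-1$ gives $\frac{1-\mu}{\sigma} - \frac{-1-\mu}{\sigma} = \frac{2}{\sigma}$, so that $\phi_S(x|_{i\to 1}) - \phi_S(x|_{i\to -1}) = \frac{2}{\sigma}\,\phi_{S\setminus\{i\}}(x)$. Summing the weighted contributions over all $S \ni i$ then yields exactly the claimed pointwise formula $\text{Inf}_i(f(x)) = \frac{2}{\sigma}\sum_{S\ni i}\wf{S}\,\phi_{S\setminus\{i\}}(x)$.

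For the expected influence I would take $\E_{x\sim\mathcal{B}_p}$ of this identity and push the expectation inside the finite sum. The mean-zero property coming from orthonormality, namely $\E_x[\phi_T(x)] = \mathbbm{1}_{T = \emptyset}$, annihilates every term except the one with $S\setminus\{i\} = \emptyset$, i.e.\ $S = \{i\}$, leaving $\text{Inf}_i(f) = \frac{2}{\sigma}\wf{\{i\}}$.

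There is no substantial obstacle here: the argument is a direct computation once the basis is written out explicitly. The only points requiring care are the endpoint bookkeeping $\frac{1-\mu}{\sigma} - \frac{-1-\mu}{\sigma} = \frac{2}{\sigma}$ and the observation that $\phi_{S\setminus\{i\}}$ does not depend on $x_i$, which is precisely what makes the difference collapse into a clean multiple of $\phi_{S\setminus\{i\}}$.
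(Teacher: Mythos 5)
Your proposal is correct and follows essentially the same route as the paper: expand $f$ in the basis $\{\phi_S\}$, note that only sets $S \ni i$ survive the difference, factor out $\frac{x_i-\mu}{\sigma}$ to get the $\frac{2}{\sigma}\phi_{S\setminus\{i\}}(x)$ contribution, and take expectation so that only $S=\{i\}$ remains. The endpoint computation $\frac{1-\mu}{\sigma}-\frac{-1-\mu}{\sigma}=\frac{2}{\sigma}$ and the use of $\E_x[\phi_T(x)]=\mathbbm{1}_{T=\emptyset}$ are exactly the steps the paper sketches.
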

Thus the degree-1 Fourier coefficients are directly related to average influence of individual points.
Similar results can be shown for other definition of single point influence: $\E_{x}\left[f(x) - f(x|_{i\rightarrow -1})\right]$ and $\E_{x}\left[f(x|_{i\rightarrow 1}) - f(x)\right]$ are equal to $p ~\text{Inf}_{i}(f)$ and $(1-p) ~\text{Inf}_{i}(f)$ respectively.
The proof of this follows by observing that $\E_{x}[f(x|_{i \rightarrow 1}) - f(x|_{i \rightarrow -1})] = \sum_{S \ni i} \wf{S} \left(\frac{1-\mu}{\sigma} - \frac{-1-\mu}{\sigma}\right) \phi_{S\backslash \{i\}}(x)$.
The only term that is not zero in expectation is the one for $S=\{i\}$, thus proving the result.
\Cref{sec:group_influence} deals with the influence of add or deleting larger subsets of points.

\looseness-1\paragraph{Continuous vs Discrete Influence}
\looseness-1\citet{koh2017understanding} utilize a continuous notion of influence: train a model using dataset $x \in \{\pm 1 \}^N$, and then treat the $i$-th coordinate of $x$ as a continuous variable $x_{i}$.
Compute $\frac{d}{dx_{i}} f$ at $x_{i} =1$ using gradients and Hessians of the loss at end of training.
This is called the {\em influence of the $i$-th datapoint} on $f$. 
As mentioned in \Cref{sec:intro}  in several other contexts one uses the discrete influence~\ref{eq:influencei}, which has a better connection to harmonic analysis.
Experiments in \citet{koh2017understanding} suggest that in a variety of models, the continuous influence closely tracks the discrete influence.



\subsection{Linear datamodels from a Harmonic Analysis Lens}
Next we turn to the phenomenon~\citet{ilyas2022datamodels} that a function $f(x)$ related to average test error\footnote{The average is not over test error but over the difference between the correct label logits and the top logit among the rest, at the output layer of the deep net.} (where $x$ is the training set) often turns out to be approximable by a linear function $\theta_0 + \sum_i \theta_i \bar{x}_i$, where $\bar{x}\in\{0,1\}^{N}$ is the binary version of $x\in\{\pm1\}^{N}$.
It is important to note that this approximation (when it exists) holds only in a least squares sense, meaning that the following is small: $\E_{x}[(f(x) -\theta_0 - \sum_i \theta_i \bar{x}_i)^2]$ where the expectation is over $p$-biased $x$. 

The authors suggest that $\theta_i$ can be seen as an estimate of the average discrete influence of variable $i$. 
While this is intuitive, they do not give a general proof (their Lemma 2 proves it for $p=1/2$ with $\ell_{2}$ regularization).
The following result exactly characterizes the solutions for arbitrary $p$ and with both $\ell_{1}$ and $\ell_2$ regularization. 

\begin{restatable}[Characterizing solutions to linear datamodels]{theorem}{bestlinear}
\label{thm:bestlinear}
    Denote the quality of a linear datamodel $\theta\in\R^{N+1}$ on the $p$-biased distribution over training sets $\mathcal{B}_{p}$ by
    \ifthenelse{\boolean{arxiv}}{
    \begin{align}
        \mathcal{R}(\theta)
        \coloneqq \mathbb{E}_{x\sim \mathcal{B}_{p}} \left[\left(f(x) - \theta_{0} - \sum_{i=1}^{N}\theta_{i} \bar{x}_{i}\right)^{2}\right].
        \label{eq:quality_linear_fit}
    \end{align}
    }{
    \begin{align}
        \mathcal{R}(\theta)
        \coloneqq \mathbb{E}_{x\sim \mathcal{B}_{p}} [(f(x) - \theta_{0} - \sum_{i=1}^{N}\theta_{i} \bar{x}_{i})^{2}].
        \label{eq:quality_linear_fit}
    \end{align}
    }
    where $\bar{x}\in\{0,1\}^{N}$ is the binary version for any $x\in\{\pm1\}^{N}$.
    Then for $\mu = 2p-1$ and $\sigma = \sqrt{4p(1-p)}$, the following are true about the optimal datamodels with and without regularization:
    \begin{enumerate}[label=(\alph*),leftmargin=*]
        \item\label{thm:bestlinear_a} The unregularized minimizer $\theta^{\star} = \argmin \mathcal{R}(\theta)$ satisfies
        \begin{align}
            \theta^{\star}_{i} = \frac{2}{\sigma} \wf{\{i\}}, ~~
            \theta^{\star}_{0} =  \wf{\emptyset} - \frac{(\mu+1)}{2}\sum_{i} \theta^{\star}_{i}.
            \label{eq:optimal_theta}
        \end{align}
        Furthermore the residual error is the sum of all Fourier coefficients of order 2 or higher.
       \begin{align}
            \mathcal{R}(\theta^{\star})
            = B_{\ge 2}
            \coloneqq \sum_{\substack{S\subseteq[N]: |S|\ge 2}} \wf{S}^{2}
            \label{eq:residual_expression}
        \end{align}
        
        \item\label{thm:bestlinear_b} The minimizer with $\ell_{2}$ regularization $\theta^{\star}(\lambda,\ell_{2}) = \argmin \left\{\mathcal{R}(\theta) + \lambda\|\theta_{1:N}\|_{2}^{2}\right\}$ satisfies
        \begin{align}
            \theta^{\star}(\lambda,\ell_{2})_{i}
            = \frac{2}{\sigma}\left(1 + \frac{4\lambda}{\sigma^{2}}\right)^{-1} \wf{\{i\}},~~
            \label{eq:optimal_theta_l2}
        \end{align}
        
        \item\label{thm:bestlinear_c} The minimizer with $\ell_{1}$ regularization $\theta^{\star}(\lambda,\ell_{1}) = \argmin \left\{\mathcal{R}(\theta) + \lambda\|\theta_{1:N}\|_{1}\right\}$ satisfies
        \begin{align}
            \theta^{\star}(\lambda,\ell_{1})_{i}
            &= \frac{2}{\sigma} \left(\left(\wf{\{i\}} - \nicefrac{\lambda}{\sigma}\right)_{+} - \left(-\wf{\{i\}} - \nicefrac{\lambda}{\sigma}\right)_{+}\right)
            = \frac{2}{\sigma}\sign\left(\wf{\{i\}}\right)\left(\left|\wf{\{i\}}\right| - \nicefrac{\lambda}{\sigma}\right)_{+}
            \label{eq:optimal_theta_l1}
        \end{align}
        where $z_{+} = z \mathbbm{1}_{z>0}$ is the standard ReLU operation.
    \end{enumerate}
\end{restatable}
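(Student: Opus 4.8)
The plan is to exploit the fact that the affine predictors $\theta_{0} + \sum_{i}\theta_{i}\bar{x}_{i}$ span exactly the degree-$\le 1$ part of the orthonormal system, namely $\{\phi_{\emptyset}, \phi_{\{1\}},\dots,\phi_{\{N\}}\}$, so that $\mathcal{R}$ becomes diagonal in the Fourier coordinates and each regularizer decouples coordinatewise. First I would perform a change of variables. Since $x_{i} = \sigma\,\phi_{\{i\}}(x) + \mu$, the binary encoding satisfies $\bar{x}_{i} = \tfrac{x_{i}+1}{2} = \tfrac{\sigma}{2}\phi_{\{i\}}(x) + \tfrac{\mu+1}{2}$. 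Substituting and collecting constants rewrites the predictor as $c_{0}\phi_{\emptyset} + \sum_{i} c_{i}\phi_{\{i\}}$, where $c_{i} = \tfrac{\sigma}{2}\theta_{i}$ and $c_{0} = \theta_{0} + \tfrac{\mu+1}{2}\sum_{i}\theta_{i}$. This map $(\theta_{0},\theta_{1:N}) \mapsto (c_{0},c_{1:N})$ is an invertible triangular linear change of variables, so minimizing over $\theta$ is equivalent to minimizing over $c$.

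Next I would invoke Parseval's identity for the basis $\{\phi_{S}\}$ under $\mathcal{B}_{p}$. Since the predictor has Fourier support only on $\emptyset$ and the singletons, Parseval gives $\mathcal{R}(\theta) = (\wf{\emptyset} - c_{0})^{2} + \sum_{i}(\wf{\{i\}} - c_{i})^{2} + B_{\ge 2}$. This immediately yields part~\ref{thm:bestlinear_a}: the unique minimizer is $c_{0} = \wf{\emptyset}$ and $c_{i} = \wf{\{i\}}$, and the residual equals the untouched tail $B_{\ge 2} = \sum_{|S|\ge 2}\wf{S}^{2}$. Undoing the change of variables recovers $\theta^{\star}_{i} = \tfrac{2}{\sigma}\wf{\{i\}}$ and $\theta^{\star}_{0} = \wf{\emptyset} - \tfrac{\mu+1}{2}\sum_{i}\theta^{\star}_{i}$.

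For the regularized cases the key observation is that the penalties touch only $c_{1:N}$ and not $c_{0}$, so $c_{0} = \wf{\emptyset}$ remains optimal and the problem splits into $N$ independent scalar minimizations. Writing $\|\theta_{1:N}\|_{2}^{2} = \tfrac{4}{\sigma^{2}}\|c_{1:N}\|_{2}^{2}$ turns each coordinate into the ridge problem $\min_{c_{i}} (\wf{\{i\}} - c_{i})^{2} + \tfrac{4\lambda}{\sigma^{2}}c_{i}^{2}$, whose solution is the stated shrinkage $(1 + 4\lambda/\sigma^{2})^{-1}\wf{\{i\}}$, giving part~\ref{thm:bestlinear_b}. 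Likewise $\|\theta_{1:N}\|_{1} = \tfrac{2}{\sigma}\|c_{1:N}\|_{1}$ yields the scalar lasso $\min_{c_{i}}(\wf{\{i\}} - c_{i})^{2} + \tfrac{2\lambda}{\sigma}|c_{i}|$, whose minimizer is the soft-threshold $\sign(\wf{\{i\}})\,(|\wf{\{i\}}| - \lambda/\sigma)_{+}$, giving part~\ref{thm:bestlinear_c}.

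The only genuinely delicate point, and the one I would verify carefully, is the bookkeeping of the intercept under the change of variables. Because $\theta_{0}$ enters $c_{0}$ but is itself unregularized, I must confirm that for every fixed $c_{1:N}$ the intercept can freely realize the optimal value $c_{0} = \wf{\emptyset}$; this is precisely what makes the $c_{0}$ term vanish and allows the regularized subproblems to decouple cleanly. Everything else reduces to a routine one-dimensional calculus computation and the standard soft-thresholding formula.
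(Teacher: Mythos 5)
Your proposal is correct and follows essentially the same route as the paper's proof: the change of variables $\bar{\theta}_{i}=\tfrac{\sigma}{2}\theta_{i}$, $\bar{\theta}_{0}=\theta_{0}+\tfrac{\mu+1}{2}\sum_{i}\theta_{i}$, Parseval to diagonalize $\mathcal{R}$, and coordinatewise ridge/soft-thresholding for the regularized cases. Your explicit check that the unregularized intercept can always absorb $c_{0}=\wf{\emptyset}$ is a point the paper leaves implicit, but the argument is the same.
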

\looseness-1This result shows that the optimal linear datamodel with various regularization schemes, for any $p\in(0, 1)$ are directly related to the first order Fourier coefficients at $p$.
Given that the average discrete influences, from \Cref{eq:influencei}, are also the first order coefficients, this result directly establishes a connection between datamodels and influences.
Result \ref{thm:bestlinear_c} suggests that $\ell_{1}$ regularization has the effect of clipping the Fourier coefficients such that those with small magnitude are set to 0, thus encouraging sparse datamodels.
Furthermore, \Cref{eq:residual_expression} also gives a simple expression for the residual of the best linear fit which we utilize for our efficient residual estimation procedure in \Cref{sec:residual_estimation}.

The proof of the full result is presented in \Cref{sec:apx_proofs_fourier}, however we present a proof sketch of the result \ref{thm:bestlinear_a} to highlight the role of the Fourier basis and coefficients.
\begin{proof}[Proof sketch for \Cref{thm:bestlinear}\ref{thm:bestlinear_a}]
    Since $\{\phi_{S}\}_{S\subseteq[N]}$ is an orthonormal basis for the inner product space with $\langle f, g\rangle_{\mathcal{B}_{p}} = \ex_{x\sim \mathcal{B}_{p}}[f(x) g(x)]$, we can rewrite $\mathcal{R}(\theta)$ as follows
    \ifthenelse{\boolean{arxiv}}{
    \begin{align*}
        \mathcal{R}({\theta})
        &= \mathbb{E}_{x\sim \mathcal{B}_{p}} \left(f(x) - \theta_{0} - \sum_{i}\theta_{i} \bar{x}_{i}\right)^{2}
        = \mathbb{E}_{x\sim \mathcal{B}_{p}} \left(f(x) - \bar{\theta}_{0} - \sum_{i}\bar{\theta}_{i} \phi_{\{i\}}(x)\right)^{2}\\
        &= \left\|f - \bar{\theta}_{0} - \sum_{i} \bar{\theta}_{i} \phi_{\{i\}}\right\|_{\mathcal{B}_{p}}^{2}, \text{ where }\|\cdot\|_{\mathcal{B}_{p}} \coloneqq \langle \cdot, \cdot \rangle_{\mathcal{B}_{p}}
    \end{align*}
    }{
    \begin{align*}
        \mathcal{R}({\theta})
        &= \mathbb{E}_{x\sim \mathcal{B}_{p}} (f(x) - \theta_{0} - \sum_{i}\theta_{i} \bar{x}_{i})^{2}
        = \mathbb{E}_{x\sim \mathcal{B}_{p}} (f(x) - \bar{\theta}_{0} - \sum_{i}\bar{\theta}_{i} \phi_{\{i\}}(x))^{2}\\
        &= \|f - \bar{\theta}_{0} - \sum_{i} \bar{\theta}_{i} \phi_{\{i\}}\|_{\mathcal{B}_{p}}^{2}, \text{ where }\|\cdot\|_{\mathcal{B}_{p}} \coloneqq \langle \cdot, \cdot \rangle_{\mathcal{B}_{p}}
    \end{align*}
    }
    where $\bar{\theta}_{i} = \nicefrac{\sigma}{2} ~\theta_{i}$ and $\bar{\theta}_{0} = \theta_{0} + \nicefrac{(\mu+1)}{2} \sum_{i} \theta_{i}$.
    Due to the orthonormality of $\{\phi_{S}\}_{S\subseteq[N]}$, the minimizer $\bar{\theta}^{\star}$ will lead to a projection of $f$ onto the span of $\{\phi_{S}\}_{|S|\le 1}$, which gives $\bar{\theta}_{i} = \wf{\{i\}}$ and thus $\theta_{i} = \nicefrac{2}{\sigma} \wf{\{i\}}$.
    Furthermore the residual is the norm of the projection of $f$ onto the orthogonal subspace $\text{span}\left(\{\phi_{S}\}_{|S|\ge 2}\right)$, which is precisely $\sum_{|S|\ge 2} \wf{S}^{2}$.
\end{proof}

\section{Noise stability and quality of linear datamodels}
\label{sec:ns_residual}

\Cref{thm:bestlinear} characterizes that best linear datamodel from \citet{ilyas2022datamodels} for any the test function.
The unanswered question is: {\em Why does this turn out to be a good surrogate (i.e.\ with low residual error) for the actual function $f$?} A priori, one expects $f(x)$, the result of deep learning on $x$, to be a complicated function. 
In this section we use the idea of {\em noise stability} to provide an intuitive explanation.
Furthermore we show how noise stability can be leveraged for efficient estimation of the quality of fit of linear datamodels, without having to learn the datamodel
(which, as noted in~\citet{ilyas2022datamodels}, requires training a large number of nets corresponding to random training sets $x$).

\paragraph{Noise stability.}
\label{sec:noise_stability}

Suppose $x$ is $p$-biased as above.
We define a $\rho$-correlated r.v. as follows:
\begin{definition}[$\rho$-correlated]
\label{def:rho_correlated}
    For $x\in \{\pm1\}^{N}$, we say a r.v.\ $x'$ is {\em $\rho$-correlated to $x$} if it is sampled as follows:
    If $x_{i}=1$, then $x'_{i} = -x_{i}$ w.p. $(1-\rho)(1-p)$. If $x_{i}=-1$, $x'_{i} = -x_{i}$ w.p. $(1-\rho)p$.
\end{definition}
Note that $x'_{i}=1$ w.p. $p (1 - (1-\rho)(1-p)) + (1 - p) (1 - \rho) p = p$, so both $x$ and $x'$ represent training datsets of expected size $p N$ with expected intersection of $(p + \rho(1-p))N$.
Define the {\em noise-stability of $f$ at noise rate $\rho$}  as $h(\rho) = \mathbb{E}_{x,x'} [f(x) f(x')]$  where $x, x'$ are $\rho$-correlated. Noise-stability plays a role reminiscent of moment generating function in probability, since ortho-normality implies
\begin{align}
    h(\rho) = \mathbb{E}_{x,x'} [f(x) f(x')] & = \sum_{S} \wf{S}^{2} \rho^{|S|} = \sum_{i=1}^d B_i \rho^i, \text{ where }B_i = \sum_{S: |S|=i} \wf{S}^2.\label{eq:noise_stability}
\end{align}

Thus $h(\rho)$ is a polynomial in $\rho$ where the coefficient of $\rho^i$ captures the $\ell_2$ weight of  coefficients corresponding to sets $S$ of size $i$.

\begin{figure}[!t]
    \begin{subfigure}[b]{0.32\textwidth}
        \includegraphics[scale=0.24]{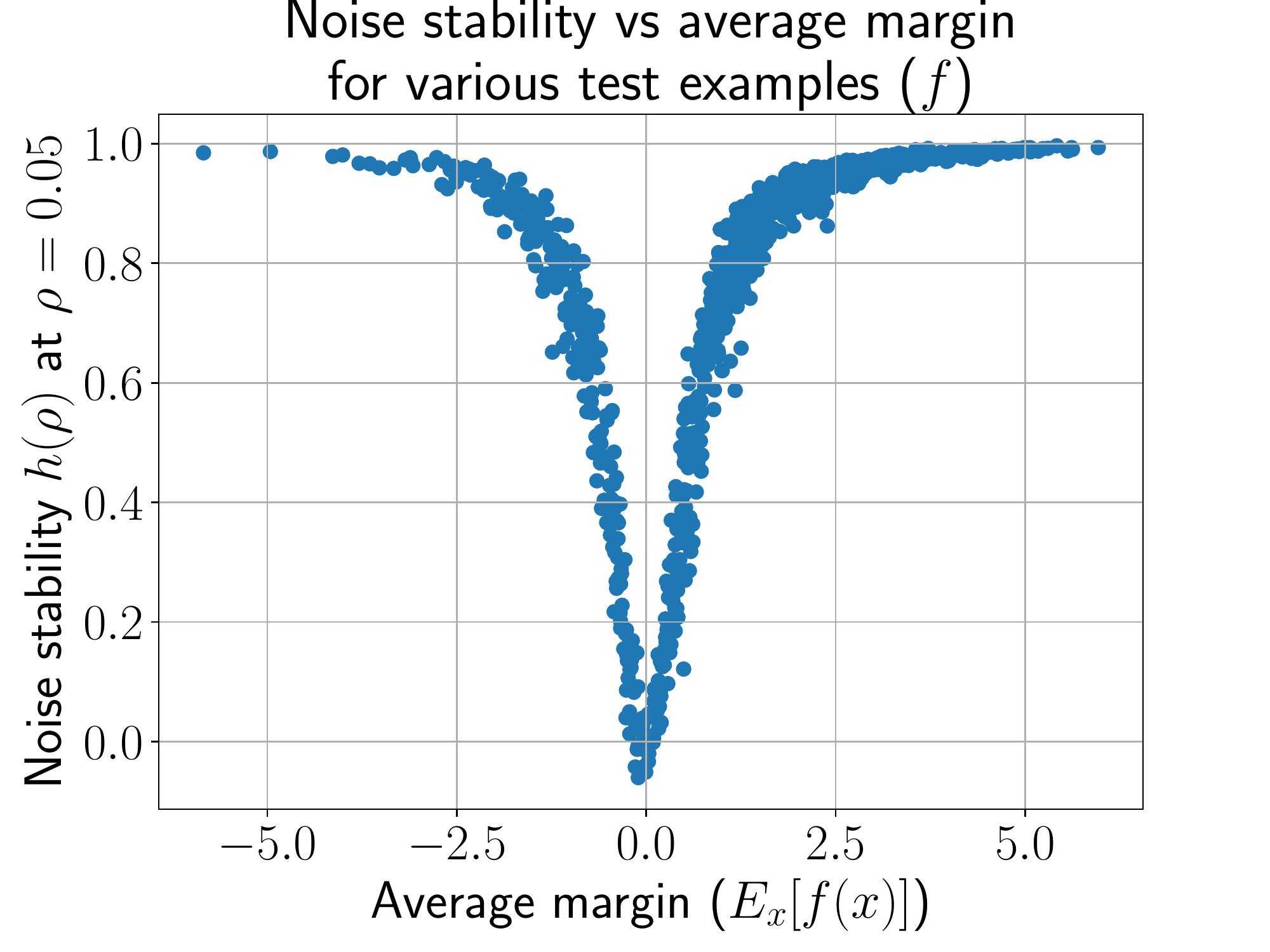}
        \caption{Noise stability vs margin}
        \label{fig:margin_ns_res_a}
    \end{subfigure}
    \begin{subfigure}[b]{0.32\textwidth}
        \includegraphics[scale=0.24]{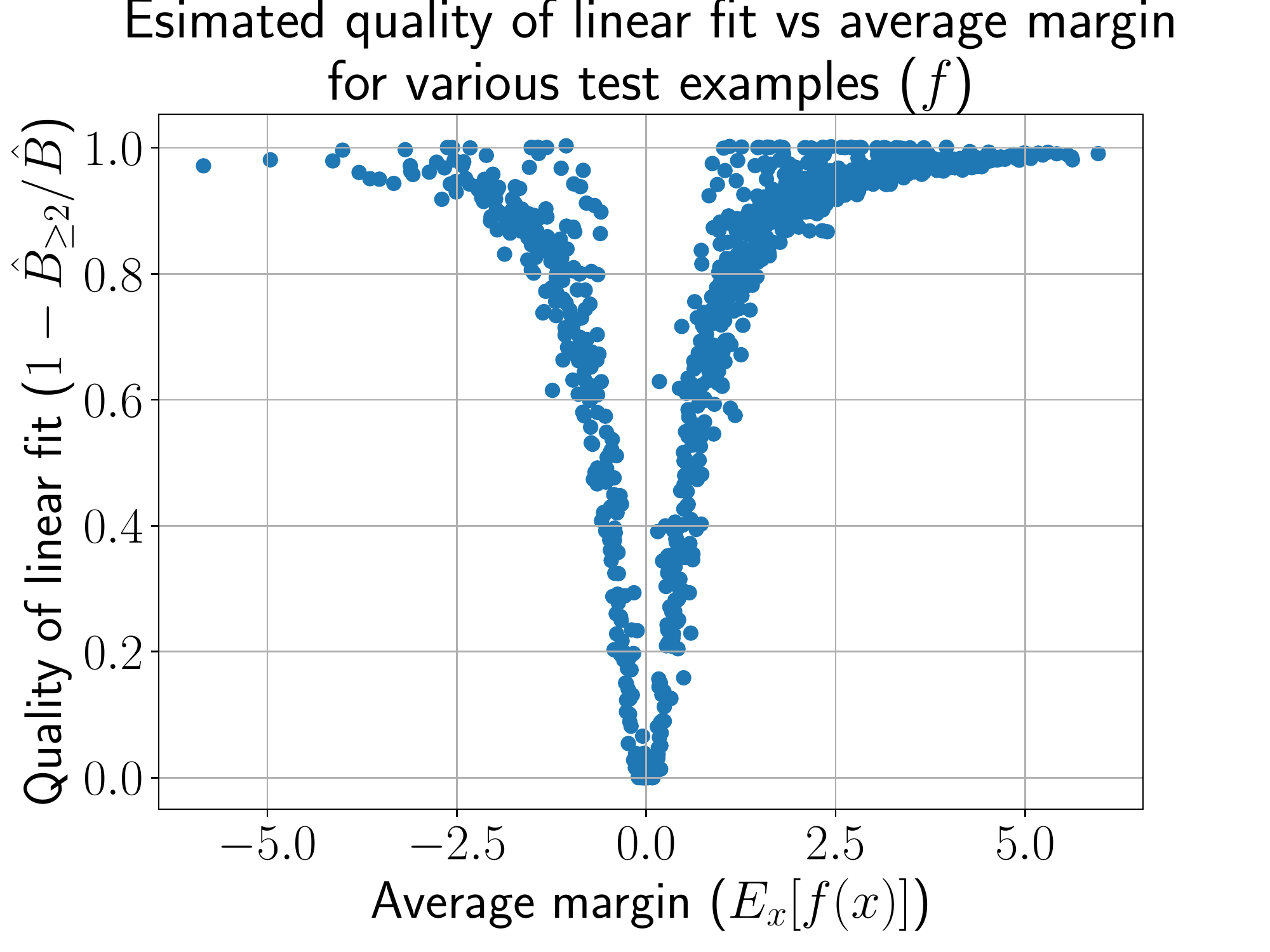}
    \caption{Quality of fit vs margin}
    \label{fig:margin_ns_res_b}
    \end{subfigure}
    \begin{subfigure}[b]{0.32\textwidth}
        \includegraphics[scale=0.24]{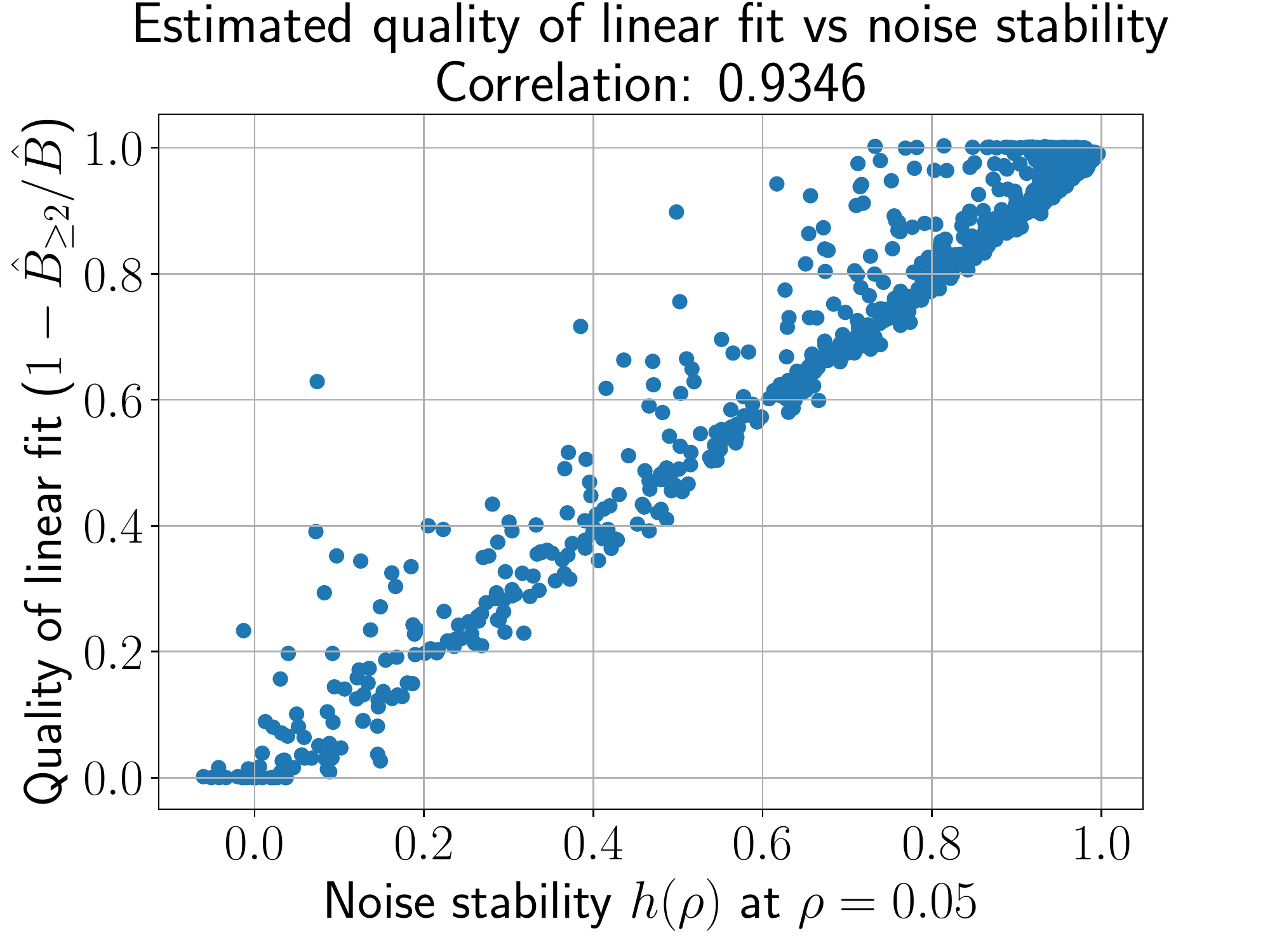}
    \caption{Quality of fit vs noise stability}
    \label{fig:margin_ns_res_c}
    \end{subfigure}
    \caption{Scatter plots for 1000 test examples and their margin functions $f$ ($\log$ of ratio of correct label probability and the best probability for another label), for networks trained on $p$-biased sets $x$.
    (a) The normalized noise stability from \Cref{thm:NStolinear1} is high for many points; $\approx44$\% points are higher than 0.9. Also, noise stability is low for points that have margin close to 0, i.e. those that are close to the decision boundary.
    (b) Similarly, the estimated quality of linear fit (using \Cref{alg:residual_est}) is good when the model is confidently correct or wrong.
    (c) Noise stability for $\rho=0.05$ correlates highly with the estimated quality of linear fit, providing credibility to \Cref{thm:NStolinear1}.}
    \label{fig:margin_ns_res}
\end{figure}

\subsection{Why should linear datamodels be a good approximation?}
\label{subsec:understandlinear}





Suppose $f =\sum_S \wf{S}\phi_S$  is the function of interest. Let
$B_i$ stand for $\sum_{S: |S| =i} \wf{S}^2$. 
 Define the {\em normalized noise stability} as $\bar{h}(\rho) = \mathbb{E}_{x,x'}[f(x) f(x')] ~/~ \mathbb{E}_{x}[f(x)^{2}]$ for $\rho\in[0,1]$. Intuitively, since $f$ concerns test loss, one expects that as the number of training samples grows, test behavior of two correlated data sets $x, x'$ would not be too different, since we can alternatively think of picking $x, x'$ as first randomly picking their intersection and then augmenting this common core with two disjoint random datasets. 
Thus intuitively,  normalized noise stability should be high, and perhaps close to its maximum value of $1$. If it is indeed close to $1$ then the next theorem (a modification of a related theorem for boolean-valued functions in \citet{o2014analysis}) gives a first-cut bound on the quality of the best linear approximation in terms of the magnitude of the residual error. (Note that linear approximation includes the case that $f$ is constant.) 




\begin{restatable}{theorem}{NStolinear}
\label{thm:NStolinear1}
    The quality of the best linear approximation to $f$ can be bounded in terms of the normalized noise stability $\bar{h}$ as
    \begin{align}
        \text{(Normalized residual)} ~~~\frac{\sum_{S: |S|\ge 2} \wf{S}^{2}}{\sum_{S} \wf{S}^{2}} \le \frac{1 - \bar{h}(\rho)}{1 - \rho^{2}}.
        \label{eq:normalized_residual}
    \end{align}
\end{restatable}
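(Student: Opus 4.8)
The plan is to reduce everything to the degree-weight profile $B_i = \sum_{S:|S|=i} \wf{S}^2$ and then establish the inequality by a term-by-term comparison. First I would invoke the Fourier expansion of the noise stability from \Cref{eq:noise_stability}, namely $h(\rho) = \sum_i B_i \rho^i$, together with Parseval's identity $\|f\|_{\mathcal{B}_{p}}^2 = \sum_i B_i$, to rewrite the normalized noise stability as $\bar{h}(\rho) = (\sum_i B_i \rho^i)/(\sum_i B_i)$. Clearing the (nonnegative) denominators $\|f\|_{\mathcal{B}_{p}}^2$ and $1-\rho^2$ in the target bound \Cref{eq:normalized_residual}, the claim becomes the polynomial inequality $(1-\rho^2)\sum_{i\ge 2} B_i \le \sum_i B_i(1-\rho^i)$, so the entire theorem is equivalent to this statement about the $B_i$'s.

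Next I would peel off the low-degree terms on the right-hand side $\sum_i B_i(1-\rho^i)$. The $i=0$ term vanishes identically since $1-\rho^0 = 0$, and the $i=1$ term equals $B_1(1-\rho)$, which is nonnegative because $B_1\ge 0$ and $\rho\in[0,1]$. Discarding this nonnegative contribution only makes the right-hand side smaller, so it suffices to prove the restricted inequality $(1-\rho^2)\sum_{i\ge 2} B_i \le \sum_{i\ge 2} B_i(1-\rho^i)$, where both sides range only over degrees $i\ge 2$.

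The crux is then a purely pointwise comparison of the summands. For each $i\ge 2$ and $\rho\in[0,1]$ the map $i\mapsto \rho^i$ is nonincreasing in the exponent, so $\rho^i\le \rho^2$, equivalently $1-\rho^2\le 1-\rho^i$; multiplying through by $B_i\ge 0$ yields $B_i(1-\rho^2)\le B_i(1-\rho^i)$. Summing over all $i\ge 2$ gives exactly the restricted inequality, completing the proof.

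Since the computation is short, there is no single deep obstacle; the two points demanding care are the direction of the exponent inequality $\rho^i\le\rho^2$ for $i\ge 2$ on $[0,1]$ (the monotonicity of $\rho\mapsto\rho^i$ in the exponent is reversed on the unit interval, which is precisely what makes the degree-$2$ truncation the right threshold), and the bookkeeping of the $B_0$ and $B_1$ terms so that dropping them genuinely strengthens rather than weakens the bound.
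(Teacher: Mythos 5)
Your proposal is correct and is essentially the paper's own argument in rearranged form: both hinge on the pointwise bound $\rho^i \le \rho^2$ for $i \ge 2$, $\rho \in [0,1]$, together with trivially handling the $B_0$ and $B_1$ terms (the paper bounds $\bar{h}(\rho)$ from above by $\bigl(B_0 + B_1 + \rho^2 B_{\ge 2}\bigr)/B$ and rearranges, while you clear denominators first and compare summands). No substantive difference.
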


\Cref{thm:NStolinear1} is well-known in Harmonic analysis and is in some sense the best possible estimate if all we have is the noise stability for a single (and small) value of $\rho$.
In fact we find in \Cref{fig:margin_ns_res_c} that the noise stability estimate does correlate strongly with the estimated quality of linear fit (based on our procedure from \Cref{sec:residual_estimation}).
\Cref{fig:poly_fits} plots noise stability estimates for small values of $\rho$.

\looseness-1In standard machine learning settings, it is not any harder to estimate $h(\rho)$ for more than one $\rho$, and this is is used in next section in a method to better estimate of the quality of linear approximation.

\subsection{Better estimate of quality of linear approximation}
\label{sec:residual_estimation}

A simple way to test the quality of the best linear fit, as in \citet{ilyas2022datamodels}, is to learn a datamodel using samples and evaluate it on held out samples.
However learning a linear datamodel requires solving a linear regression on $N$-dimensional inputs $x$, which could require $\mathcal{O}(N)$ samples in general.
A sample here corresponds to training a neural net on a $p$-biased dataset $x$, and training $\mathcal{O}(N)$ such models can be expensive. (\citet{ilyas2022datamodels} needed to train around a million models.)

\looseness-1Instead, can we estimate the quality of the linear fit without having to learn the best linear datamodel?
This question relates to the idea of property testing in boolean functions, and indeed our next result yields a better estimate by using noise stability at multiple points. 
The idea is to leverage \Cref{eq:noise_stability}, where the Fourier coefficients of sets of various sizes show up in the noise stability function $h(\rho) = \sum_{i=0}^{N} B_{i} \rho^{i}$ as the non-negative coefficients of the polynomial in $\rho$.
Since the residual of the best linear datamodel, from \Cref{thm:bestlinear}, is the total mass of Fourier coefficients of sizes at least 2, i.e. $B_{\ge 2} = \sum_{i=2}^{N} B_{i}$, we can hope to learn this by fitting a polynomial on estimates of $h(\rho)$ at multiple $\rho$'s.
\Cref{alg:residual_est} precisely leverages by first estimating the degree 0 and 1 coefficients ($B_{0}$ and $B_{1}$) using noise stability estimates at a few $\rho$'s, estimating $B = \sum_{i=0}^{N} B_{i} = h(1)$ using $\rho=1$, and finally estimating the residual using the expression $B_{\ge 2} = B - B_{0} - B_{1}$.
The theorem below shows that this indeed leads to a good estimate of the residual with having to train way fewer (independent of $N$) models.

\begin{restatable}[]{theorem}{residualest}
\label{thm:residual_est}
    Let $\hat{B}_{\ge2}=\text{\sc ResidualEstimation}(f, n, [0, \rho, 2\rho], 2)$ be the estimated residual (see \Cref{alg:residual_est}) after fitting a degree 2 polynomial to noise stability estimates at $0, \rho, 2\rho$, using $n$ calls to $f$.
    If $n = \mathcal{O}(1/\epsilon^{3})$ and $\rho = \sqrt{\epsilon}$, then with high probability we have that $|\hat{B}_{\ge 2} - B_{\ge 2}| \le \epsilon$.
\end{restatable}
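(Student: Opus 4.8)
The plan is to make the estimator explicit, split its error into a deterministic \emph{bias} term (from fitting a degree-$2$ polynomial to the degree-$N$ function $h$) and a \emph{statistical} term (from estimating each $h(\rho_j)$ from finitely many samples), and then balance the two by choosing $\rho$ and the per-point sample size $m$. Writing $\hat h(t)$ for the empirical noise-stability estimate at correlation $t$, exact Lagrange interpolation through $(0,\hat h(0)),(\rho,\hat h(\rho)),(2\rho,\hat h(2\rho))$ yields the closed forms $\hat B_0=\hat h(0)$ and $\hat B_1=\tfrac{1}{2\rho}\bigl(4\hat h(\rho)-\hat h(2\rho)-3\hat h(0)\bigr)$, while $\hat B=\hat h(1)$ is the $\rho=1$ estimate (where $x'=x$, so $\hat h(1)$ is the empirical mean of $f(x)^2$). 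The output is $\hat B_{\ge 2}=\hat B-\hat B_0-\hat B_1$. Comparing with $B_{\ge 2}=h(1)-B_0-B_1$ and inserting the ``exact-data'' intermediaries (the same formulas evaluated at the true $h$) splits the total error into a bias part and a statistical part, to be bounded separately and combined by the triangle inequality.

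For the bias I would substitute the exact expansion $h(t)=\sum_{i\ge 0}B_i t^i$ from \Cref{eq:noise_stability} into the finite-difference formula for $\hat B_1$ and collect powers of $\rho$. The key computation is that the degree-$0$ and degree-$1$ contributions are reproduced exactly and, crucially, the degree-$2$ contribution cancels: the coefficient multiplying $B_2$ works out to $2-2^{1}=0$, which is precisely why a three-point, degree-$2$ fit is used rather than a two-point linear fit (the latter leaves an $O(\rho)$ bias from $B_2$). What survives is $\sum_{i\ge 3}B_i\,(2^{i-1}-2)\,\rho^{i-1}$, a tail with leading term $2B_3\rho^2$. Provided $2\rho<1$ (automatic once $\rho=\sqrt\epsilon$ is small), this tail is bounded geometrically by $4\rho^2\sum_{i\ge 3}B_i\le 4\rho^2\,\|f\|_{\mathcal{B}_{p}}^2$, so the bias of $\hat B_{\ge 2}$ is $O(\rho^2)$.

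For the statistical term I would use that $f$ is bounded, say $|f|\le M$, so each summand $f(x)f(x')\in[-M^2,M^2]$ and, by Hoeffding, the empirical mean $\hat h(\rho_j)$ concentrates within $O\bigl(M^2\sqrt{\log(1/\delta)/m}\bigr)$ of $h(\rho_j)$ with probability $1-\delta$; a union bound over the four evaluation points $\{0,\rho,2\rho,1\}$ controls all of them simultaneously. The only delicate point is that the finite-difference formula for $\hat B_1$ divides by $\rho$, so the per-point error $O(M^2/\sqrt m)$ is amplified to $O\bigl(M^2/(\rho\sqrt m)\bigr)$ in $\hat B_1$, whereas $\hat B_0$ and $\hat B$ inherit the unamplified $O(M^2/\sqrt m)$. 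Hence the statistical error of $\hat B_{\ge 2}$ is $O\bigl(M^2/(\rho\sqrt m)\bigr)$.

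Combining the two bounds gives $\lvert\hat B_{\ge 2}-B_{\ge 2}\rvert\le O(\rho^2)+O\bigl(M^2/(\rho\sqrt m)\bigr)$ with high probability. Setting $\rho=\sqrt\epsilon$ makes the bias $O(\epsilon)$, and choosing $m=\Theta(1/\epsilon^3)$ makes the statistical term $M^2/(\sqrt\epsilon\cdot\epsilon^{-3/2})=M^2\epsilon=O(\epsilon)$; since the total number of calls $n$ is a constant multiple of $m$, this yields $n=O(1/\epsilon^3)$ and the claimed bound (rescaling $\epsilon$ by a constant turns $O(\epsilon)$ into $\le\epsilon$). I expect the main obstacle to be exactly this bias–variance tension induced by the $1/\rho$ amplification: a smaller $\rho$ shrinks the bias but inflates the variance, and it is the balance $\rho=\sqrt\epsilon$, $m\propto\epsilon^{-3}$ that produces the cubic sample complexity rather than a naive $1/\epsilon^2$. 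A secondary point to verify carefully is that the finite-difference coefficients (here $4,-1,-3$ over $2\rho$) are the ones that make the degree-$2$ term cancel, so that the surviving bias really starts at order $\rho^2$ and not $\rho$.
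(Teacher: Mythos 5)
Your proposal is correct and follows the same overall strategy as the paper's proof: split the error into a deterministic truncation bias of order $B_{\ge 3}\rho^{2}$ (after the $1/\rho$ division) plus a Hoeffding-controlled statistical error amplified to $O(\delta/\rho)$, then balance the two at $\rho=\sqrt{\epsilon}$, $n=\Theta(1/\epsilon^{3})$. The main difference is in how you extract $\hat{B}_{0},\hat{B}_{1}$ from the three noise-stability estimates: you write the exact Lagrange/finite-difference formulas $\hat{B}_{0}=\hat h(0)$ and $\hat{B}_{1}=\tfrac{1}{2\rho}\bigl(4\hat h(\rho)-\hat h(2\rho)-3\hat h(0)\bigr)$, and verify by direct computation that the $B_{2}$ contribution cancels while the surviving tail $\sum_{i\ge3}B_{i}(2-2^{i-1})\rho^{i-1}$ is $O(\rho^{2}B_{\ge3})$; the paper instead works with the constrained least-squares solution $\hat{\bm{z}}$, invokes the optimality inequality $\|\bm{A}\hat{\bm{z}}-\bm{y}\|\le\|\bm{A}\bm{z}^{\star}-\bm{y}\|$ (valid because the true, nonnegative coefficient vector is feasible), and uses a numerically computed projection bound $\|\bm{A}\Delta\|\ge 0.39\,\rho\,|\Delta_{2}|$ to obtain the same $1/\rho$ amplification on the $B_{1}$ coordinate only. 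Your route is more explicit and makes the degree-$2$ cancellation transparent. The one mismatch is that \Cref{alg:residual_est} solves the least-squares problem subject to $\bm{z}\ge\bm{0}$, so when the unconstrained interpolant has a negative coordinate your closed forms are not literally the algorithm's output; this is easily repaired (either analyze the unconstrained solver, for which your formulas are exact since $\bm{A}$ is an invertible $3\times 3$ Vandermonde matrix, or rerun your bias and concentration bounds through the paper's feasibility-plus-optimality argument), but it should be acknowledged.
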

\looseness-1The proof of this is presented in \Cref{sec:apx_proofs_ns}.
This improves upon prior results on residual estimation for linear thresholds from \citet{matulef2010testing} that do a degree-1 approximation to $h(\rho)$ with $1/\epsilon^{4}$ samples, more than $1/\epsilon^{3}$ samples needed with our degree-2 approximation instead.
In fact, we hypothesize using a degree $d>2$ likely improves upon the dependence on $\epsilon$; we leave that for future work.
This result provides us a way to estimate the quality of the best linear datamodel without having to use $\mathcal{O}(N/\epsilon^{2})$ samples (in the worse case) for linear regression\footnote{If the datamodel is sparse, lasso can learn it with $\mathcal{O}(S\log N/\epsilon^{2})$ samples, where $S$ is the sparsity.}.
The guarantee does not even depend on $N$, although it has a worse dependence of $\epsilon$ that can likely be improved upon.


\algnewcommand{\LineComment}[1]{\State \textcolor{gray}{/* #1 */}}

\begin{algorithm}[!t]
\caption{Efficient algorithm for residual estimation}\label{alg:residual_est}
\begin{algorithmic}[1]

\Procedure{ResidualEstimation}{$f$, $n$, $[\rho_{1}, \dots, \rho_{k}]$, $d$}
\label{proc:residual}
\LineComment{$f$: function, $n$: evaluation budget, $d$: degree of approximation}
\State $\bm{y} \in \mathbb{R}^{k}$, $\bm{A} \in \R^{k \times (d+1)}$
\For{$i \in [1, \dots, k]$} 
    \State $\bm{y}_{i} \gets \text{\sc NoiseStability}(f, \nicefrac{n}{(k+1)}, \rho_{i})$\Comment{Estimate $h(\rho_{i})$}
    \State $\bm{A}_{i,:} \gets [1, \rho_{i}, \dots, \rho_{i}^{d}]$
\EndFor
\State Solve $\hat{\bm{z}} = \min_{\bm{z}\in\R^{d+1}} \|\bm{A}\bm{z} - \bm{y}\|_{2}^{2} ~\text{ s.t. }~ \bm{z}\ge\bm{0}$\Comment{Requires solving a convex program}

\State $\hat{B}_{0}, \hat{B}_{1} \gets \hat{\bm{z}}_{1}, \hat{\bm{z}}_{2}$

\State $\hat{B} \gets \text{\sc NoiseStability}(f, \nicefrac{n}{(k+1)}, 1)$\Comment{Estimate $h(1) = \sum_{i} B_{i}$; see \Cref{eq:noise_stability}}

\State $\hat{B}_{\ge 2} \gets \hat{B} - \hat{B}_{0} - \hat{B}_{1}$

\State \textbf{return} $\hat{B}_{\ge 2}$\Comment{$B_{\ge 2}$ is the residual from \Cref{thm:bestlinear}}
\EndProcedure

\\

\Procedure{NoiseStability}{$f$, $n$, $\rho$}
\State $h_{\rho} \gets 0$
\For{$j \in [1, \dots, \nicefrac{n}{2}]$}
    \State $x \sim \mathcal{B}_{p}$
    \State $x' \sim \text{\sc RhoCorr}(x, \rho)$\Comment{$x'$ is $\rho$-correlated to $x$; see \Cref{def:rho_correlated}}
    \State $h_{\rho} \gets h_{\rho} + \nicefrac{2}{n} f(x) f(x')$\Comment{Two evaluations of $f$ per $x$ needed}
\EndFor
\State \textbf{return} $h_{\rho}$\Comment{Returns an unbiased estimate for $h(\rho)$; see \Cref{lem:concentration_bound}}
\EndProcedure
\end{algorithmic}
\end{algorithm}

\looseness-1\paragraph{Experiments.}
We run our residual estimation algorithm for 1000 test examples (see \Cref{sec:apx_experiments} for details) and \Cref{fig:residual_plots} summarizes our findings.
The histogram of the estimated normalized residuals (\ref{eq:normalized_residual}) in \Cref{fig:residual_plots_a} indicates that a good linear fit exists for majority of the points, echoing the findings from \citet{ilyas2022datamodels}.
\Cref{fig:residual_plots_b,fig:residual_plots_c} study the effects of choices like degree $d$ and $\rho$.
Furthermore we find in \Cref{fig:margin_ns_res_b} an interesting connection between the predicted quality of linear fit ($1 - $ normalized residual) and the average margin of the test point: linear fit is best when models trained on $p$-biased datasets are confidently right or wrong on average.
The fit is much worse for examples that are closer to the decision boundary (smaller margin); exploration of this finding is an interesting future direction.
Finally, \Cref{fig:poly_fits,fig:poly_fits_no1} provide visualizations of the learned polynomial fits as the degree $d$ and list of $\rho$'s are varied.

\section{Understanding Group Influence and Ability for Counterfactual reasoning}
\label{sec:group_influence}

Both influence functions as well as linear datamodels display some ability to do counterfactual reasoning: to predict the effect of small changes to the training set on   the model's behavior on test data.
Specifically, they allow reasonable estimation of the difference between a model trained with $x$ and one trained with $x|_{I \rightarrow -1}$, the training set containing $x$ after deleting the points in $I$.
Averaged over $x$, this can be thought of as {\em average group influence} of deleting $I$.
We study this effect through the lens of Fourier coefficients in the upcoming section.

\begin{figure}[!t]
    \begin{subfigure}[b]{0.32\textwidth}
        \includegraphics[scale=0.24]{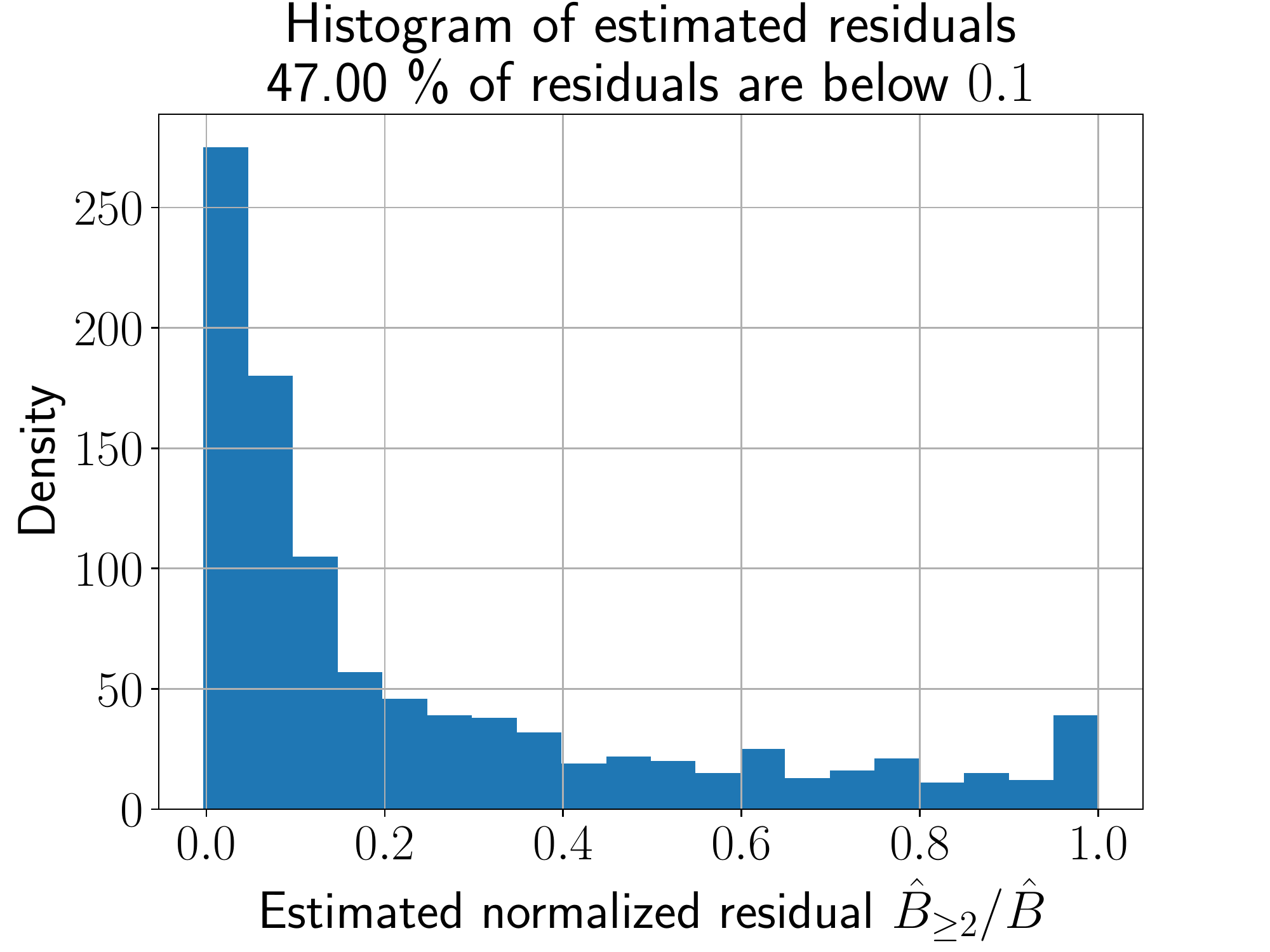}
        \caption{Histogram of residuals}
        \label{fig:residual_plots_a}
    \end{subfigure}
    \begin{subfigure}[b]{0.32\textwidth}
        \includegraphics[scale=0.24]{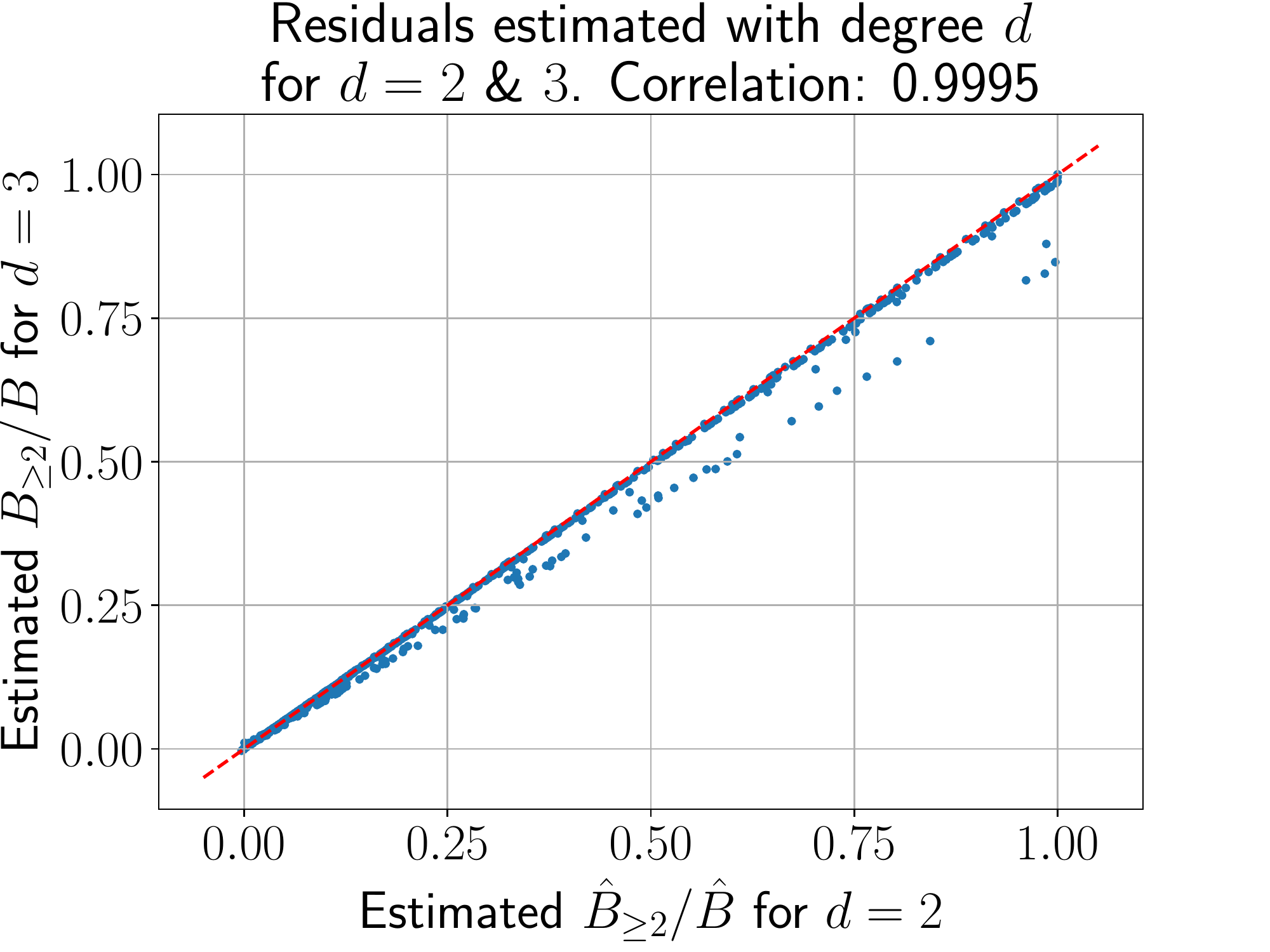}
    \caption{Effect of degree $d$}
    \label{fig:residual_plots_b}
    \end{subfigure}
    \begin{subfigure}[b]{0.32\textwidth}
        \includegraphics[scale=0.24]{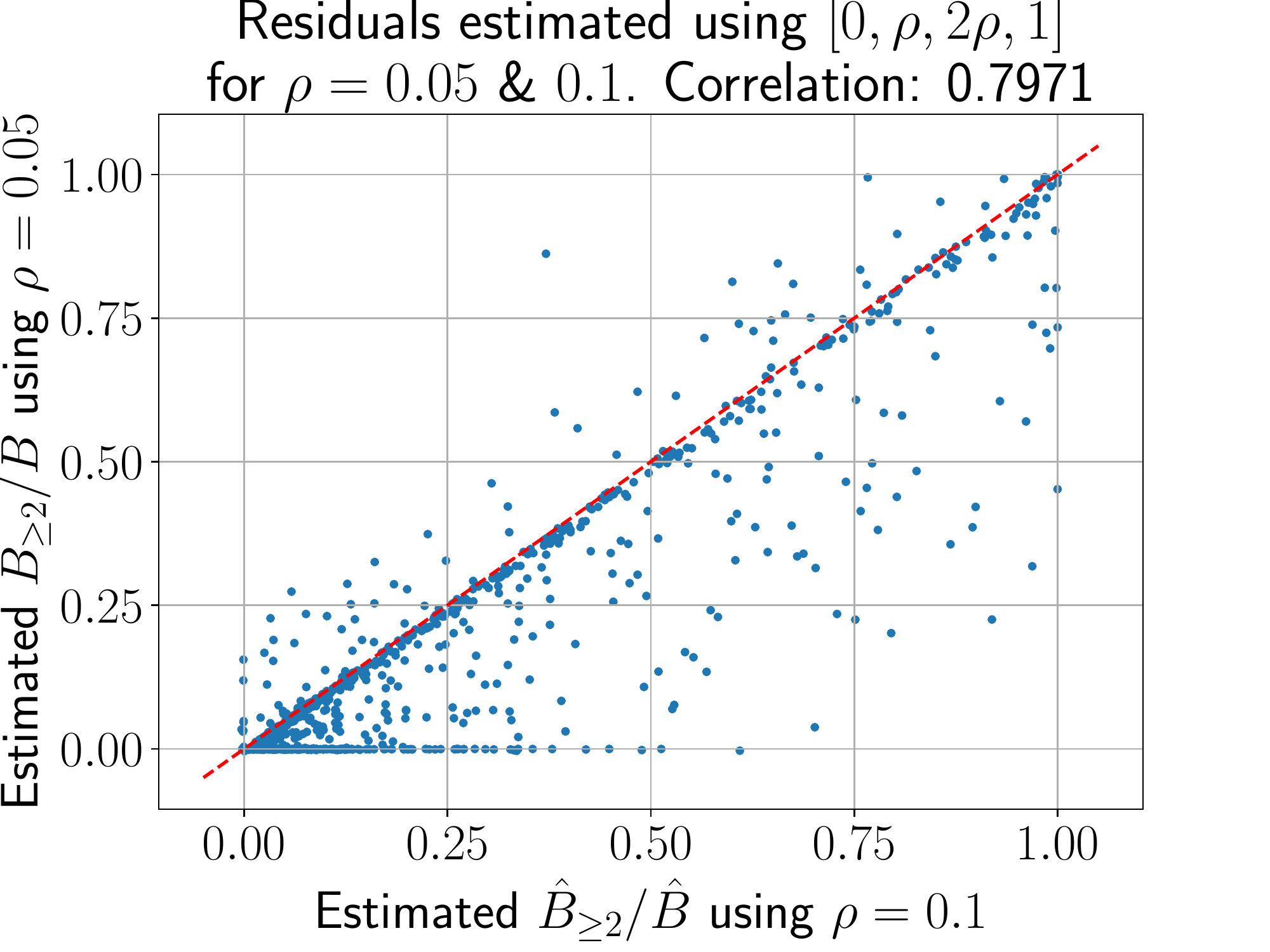}
    \caption{Effect of list of $\rho$'s}
    \label{fig:residual_plots_c}
    \end{subfigure}
    \caption{(a) Histogram of estimated normlized residual ($\hat{B}_{\ge 2} / \hat{B}$), for 1000 test examples, using \Cref{alg:residual_est} with degree $d=2$ and list $[0, 0.1, 0.2, 1]$ for $\rho$'s. Almost half of the test examples have normalized residuals below 0.1. (b) Estimations using $d=2$ and $d=3$, for various test points, are highly correlated to each other, suggesting a small effect of degree of approximation in this case. (c) Estimates using $[0, \rho, 2\rho, 1]$ in \Cref{alg:residual_est}, for $\rho=0.05$ and $\rho=0.1$; Spearman correlation is $\approx 0.8$. The choice of $\rho$ does have some effect, suggesting that there is still some noise in the estimate. Note that \Cref{thm:residual_est} requires the right scale of $\rho$ for the estimate to be accuracy.}
    \label{fig:residual_plots}
\end{figure}

\subsection{Expression for group influence}
\label{sec:group_influence_expression}

\looseness-1Let $I$ be a subset of variables and let $x \in \{-1, 1\}^{N}$ denote a random $p$-biased variable with distribution $\mathcal{B}_{p}$.
Then the expectation of $f(x) - f(x|_{I\rightarrow -1})$ can be thought of as the average {\em influence of deleting $I$}.
An interesting empirical phenomenon in \citet{koh2019accuracy} is that the group influence is often well-correlated with the sum of individual influences in $I$, i.e. $\sum_i \text{Inf}_i(f(x))$.
In fact this is what makes empirical study of group influence feasible in various settings, because influential subsets $I$ can be found without  having to spend time exponential in $|I|$ to search among all subsets of this size.
The claim in~\cite{ilyas2022datamodels} is that linear data models exhibit the same phenomenon: the sum of coefficients of coordinates in $I$ approximates the effect of training on $x|_{I \rightarrow -1}$.
We give the mathematics of such ``counterfactual reasoning'' as well as its potential limits.

\begin{restatable}{theorem}{groupinfluence}[Group influence]
\label{thm:group_influence}
    The following are true about group influence of deletion.
    \begin{align*}
        \text{Inf}_{I}(f)
        \coloneqq \mathbb{E}_{x\sim\mathcal{B}_{p}} &\left[f(x) - f(x|_{I\rightarrow -1})\right]
        = p \sum_{i\in I} \theta^{\star} - \sum_{I'\subseteq I, |I'|\ge 2} (-1)^{|I'|}\left(\frac{p}{1-p}\right)^{|I'|/2} \wf{I'}
        \\&= p \sum_{i\in I} \text{Inf}_{i}(f) - \sum_{I'\subseteq I, |I'|\ge 2} (-1)^{|I'|}\left(\frac{p}{1-p}\right)^{|I'|/2} \wf{I'}
    \end{align*}
    where $\theta^{\star}$ is the optimal datamodel (\Cref{eq:optimal_theta}) and $\text{Inf}_{i}$ is an individual influence (\Cref{eq:influencei}).
\end{restatable}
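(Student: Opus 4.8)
The plan is to work directly in the $p$-biased Fourier basis and reduce the entire claim to evaluating each basis function $\phi_S$ at the clamped point $x|_{I\to -1}$. Writing $f=\sum_S \wf{S}\phi_S$, orthonormality gives $\mathbb{E}_{x\sim\mathcal{B}_p}[f(x)]=\wf{\emptyset}$ at once (every $\phi_S$ with $S\neq\emptyset$ has mean zero), so the only real work is to compute $\mathbb{E}_{x}[f(x|_{I\to -1})]$ and subtract.

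For the key step, I would evaluate $\phi_S(x|_{I\to-1})=\prod_{i\in S}\frac{(x|_{I\to-1})_i-\mu}{\sigma}$ by splitting the product according to whether $i\in I$. Each coordinate $i\in S\cap I$ is frozen to $-1$ and contributes the constant factor $\frac{-1-\mu}{\sigma}$, while each coordinate $i\in S\setminus I$ keeps its value $x_i$ and the surviving factors reassemble into $\phi_{S\setminus I}(x)$. This yields the factorization
\[
\phi_S(x|_{I\to-1})=\Big(\tfrac{-1-\mu}{\sigma}\Big)^{|S\cap I|}\,\phi_{S\setminus I}(x).
\]
Substituting $\mu=2p-1$ and $\sigma=2\sqrt{p(1-p)}$ simplifies the constant to $\frac{-1-\mu}{\sigma}=-\sqrt{p/(1-p)}$. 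Taking $\mathbb{E}_x$ and invoking $\mathbb{E}_x[\phi_{S\setminus I}(x)]=\mathbbm{1}_{S\setminus I=\emptyset}=\mathbbm{1}_{S\subseteq I}$ annihilates every term except those with $S\subseteq I$ (for which $|S\cap I|=|S|$), leaving $\mathbb{E}_x[f(x|_{I\to-1})]=\sum_{S\subseteq I}\wf{S}\,(-1)^{|S|}\,(p/(1-p))^{|S|/2}$.

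Finally I would assemble $\text{Inf}_I(f)=\wf{\emptyset}-\mathbb{E}_x[f(x|_{I\to-1})]$. The $S=\emptyset$ summand equals $\wf{\emptyset}$ and cancels, so the sum runs over nonempty $S\subseteq I$; splitting off $|S|=1$ from $|S|\ge 2$ produces the two groups in the statement. The degree-$\ge 2$ terms match the claimed correction $-\sum_{I'\subseteq I,\,|I'|\ge2}(-1)^{|I'|}(p/(1-p))^{|I'|/2}\wf{I'}$ verbatim. For the degree-$1$ terms, each $\{i\}\subseteq I$ contributes $+\sqrt{p/(1-p)}\,\wf{\{i\}}$ (the two minus signs cancel), and the elementary identity $p\cdot\tfrac{2}{\sigma}=\frac{p}{\sqrt{p(1-p)}}=\sqrt{p/(1-p)}$ rewrites this as $p\,\tfrac{2}{\sigma}\wf{\{i\}}=p\,\theta^{\star}_{i}=p\,\text{Inf}_i(f)$, using \Cref{eq:optimal_theta} and \Cref{eq:influencei}; this recovers both displayed lines simultaneously. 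There is no genuine obstacle here: the argument is a single clean computation, and the only places requiring care are the bookkeeping in the $\phi_S$ evaluation (the $|S\cap I|$ versus $|S|$ distinction) and the constant simplification that makes the degree-$1$ mass line up exactly with $p\sum_{i\in I}\theta^{\star}_{i}$ rather than acquiring a spurious factor.
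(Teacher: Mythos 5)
Your proposal is correct and follows essentially the same route as the paper's proof: expand $f$ in the $p$-biased basis, use $\phi_S(x|_{I\to-1})=\left(\tfrac{-1-\mu}{\sigma}\right)^{|S\cap I|}\phi_{S\setminus I}(x)$ with $\tfrac{-1-\mu}{\sigma}=-\sqrt{p/(1-p)}$, kill all terms with $S\not\subseteq I$ by orthonormality, and split off the degree-$0$ and degree-$1$ contributions using $\sqrt{p/(1-p)}=p\cdot\tfrac{2}{\sigma}$. All the sign and constant bookkeeping checks out.
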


Thus we see the that average group influence of deletion\footnote{Similar results can be shown for the average group influence of adding a set of points $I$.} is equal to the sum of individual influence, plus a residual term.
Proof for the above theorem for this is presented in \Cref{sec:apx_proofs_group}.

This residual term can in turn be upper bounded by $(1-p)^{-|I|/2} \sqrt{B_{\ge 2}}$ (see \Cref{lem:residual_bound}), which blows up exponentially in $|I|$.
However the findings in Figure F.1 from \citet{ilyas2022datamodels} suggest that the effect of number of points deleted is linear (or sub-linear), and far from an exponential growth.
In the next section we provide a simple example where the exponential blow-up is unavoidable, and also provide some hypothesis and hints as to why this exponential blow is not observed in practice.

\subsection{Threshold function and exponential group influence}
We exhibit the above exponential dependence on set size using  a  natural function inspired by  empirics around the {\em long tail phenomenon} \citet{feldman2020does, feldman2020neural}, which suggests   that successful classification on a specific test point depends on having seeing a certain number of ``closely related'' points during training. We model this as a sigmoidal function  that depends on a special set $A$ of coordinates (i.e., training points) and assigns probability close to 1 when many more than $\beta$ fraction of the coordinates in $A$ are $+1$. 

\newcommand{\avg}[1]{\text{avg}(#1)}
For any vector $z\in\{\pm1\}^{d}$, let $\avg{z} = \nicefrac{1}{d}\sum_{i=1}^{d} \frac{(1 + z_{i})}{2}$ denote the fraction of $1$'s in $z$ and $z_{A} = (z_{i})_{i\in A}$ denotes the subset of coordinates indexed by $A\subseteq[d]$.
\begin{example}[Sigmoid]
\label{ex:sigmoid}
    Consider a function $f(x) = S\left(\alpha (\avg{x_{A}} - \beta)\right)$ for a subset $A\subseteq[N]$ of size $M$, where $S(u) = \left(1 + e^{-u}\right)^{-1}$ is the sigmoid function.
    The function $f$ could represent the probability of the correct label for a test point when trained on the training set $x\in\{\pm1\}^{N}$.
\end{example}

For this function, we show below that for a large enough $p$, the group influence of a $\gamma$-fraction subset of $A$ is exponential in the size of the set.
\begin{restatable}{lemma}{sigmoidgroupinfluence}
\label{lem:sigmoid_group_influence}
    Consider the function\footnote{The result can potentially be shown for more general $\alpha,\beta,p$} $f$ from \Cref{ex:sigmoid} with $\beta = 0.5$ and $\alpha\rightarrow \infty$; so $f(x) = \mathbbm{1}\left\{\avg{x_{A}} > 0.5\right\}$.
    If $x$ is $p$-biased with $p>0.5$, then for any constant fraction subset $A'\subseteq A$, its group influence is exponentially larger than sum of individual influences.
    \begin{align}
        \text{Inf}_{A'}(f)
        \ge \frac{\left(2(1-p)\right)^{-|A'|+1}}{|A'|} \left(\sum_{i\in A'} \text{Inf}_{i}(f)\right).
    \end{align}
\end{restatable}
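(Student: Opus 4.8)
The plan is to first observe that with $\beta=0.5$ and $\alpha\to\infty$ the function collapses to the majority (threshold) function $f(x)=\mathbbm{1}\{\avg{x_A}>1/2\}$, which depends only on the coordinates in $A$; equivalently $f(x)=\mathbbm{1}\{c_A(x)>M/2\}$, where $c_A(x)$ counts the $+1$'s among the $M$ coordinates of $A$ and (for cleanliness) $M$ is taken even, handling parity being routine. Both quantities to be compared then reduce to binomial point/tail probabilities. Flipping a single coordinate $i\in A$ changes $f$ exactly when the remaining $M-1$ coordinates carry precisely $M/2$ ones, so by \Cref{eq:influencei} and symmetry every $i\in A$ has the same influence $\text{Inf}_i(f)=\Pr[\mathrm{Bin}(M-1,p)=M/2]=\binom{M-1}{M/2}p^{M/2}(1-p)^{M/2-1}$, and hence $\sum_{i\in A'}\text{Inf}_i(f)=|A'|\binom{M-1}{M/2}p^{M/2}(1-p)^{M/2-1}$.

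The key step is a clean telescoping identity for the group influence. Since $c_A(x)\sim\mathrm{Bin}(M,p)$ and deleting $A'$ leaves the count $c_{A''}(x)\sim\mathrm{Bin}(M-|A'|,p)$ over $A''=A\setminus A'$, I would first write
\[
\text{Inf}_{A'}(f)=\Pr[\mathrm{Bin}(M,p)>M/2]-\Pr[\mathrm{Bin}(M-|A'|,p)>M/2].
\]
Adding the deleted coordinates back one at a time and using that, for $Z_n\sim\mathrm{Bin}(n,p)$, one has $\Pr[Z_{n+1}>M/2]-\Pr[Z_n>M/2]=p\,\Pr[Z_n=M/2]$ (the extra trial flips the indicator only when $Z_n$ sits exactly at the threshold $M/2$), this telescopes to
\[
\text{Inf}_{A'}(f)=p\sum_{n=M-|A'|}^{M-1}\Pr[\mathrm{Bin}(n,p)=M/2].
\]
The top term $n=M-1$ equals $p\,\text{Inf}_i(f)$, so group influence is already exhibited as a sum of $|A'|$ pivotal probabilities, each at least as large as the individual one.

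To extract the exponential factor I would track the ratio of consecutive summands,
\[
\frac{\Pr[\mathrm{Bin}(n-1,p)=M/2]}{\Pr[\mathrm{Bin}(n,p)=M/2]}=\frac{n-M/2}{n}\cdot\frac{1}{1-p},
\]
which exceeds $1$ throughout the range when $p>1/2$, so the summands grow geometrically as $n$ decreases from $M-1$ to $M-|A'|$. The accumulated product contributes a factor $(1-p)^{-(|A'|-1)}$ together with the combinatorial product $\prod_{l=1}^{|A'|-1}\tfrac{M/2-l}{M-l}$, whose factors of roughly $\tfrac12$ are what produce the $2^{-(|A'|-1)}$, i.e. the claimed $(2(1-p))^{-|A'|+1}$ scaling after dividing by $\sum_{i\in A'}\text{Inf}_i(f)=|A'|\,\text{Inf}_i(f)$ (accounting for the $1/|A'|$ in the statement). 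Lower-bounding the geometric-type sum and simplifying using $p>1/2$ then yields the inequality.

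The main obstacle is precisely this last estimate in the constant-fraction regime $|A'|=\Theta(M)$: the combinatorial product $\prod_{l=1}^{|A'|-1}\tfrac{M/2-l}{M-l}$ is in fact exponentially smaller than $2^{-(|A'|-1)}$, so no single summand suffices and one genuinely needs the full series to accumulate enough mass. Controlling this requires $p$ large enough relative to the fraction $\gamma$ (matching the ``for a large enough $p$'' caveat in the surrounding discussion) together with $M\to\infty$; a small-scale check such as $M=4,\ p=0.75$ shows the inequality can fail without these, so the delicate part is carrying out the binomial series/tail estimate carefully rather than bounding a single dominant term.
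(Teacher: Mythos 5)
Your overall strategy matches the paper's: reduce everything to binomial probabilities at the threshold $M/2$, observe that the group influence aggregates pivotal probabilities $\Pr[\mathrm{Bin}(M-l,p)=M/2]$ over $l=1,\dots,|A'|$, and extract the factor $(2(1-p))^{-|A'|+1}$ from the ratio of the $l$-th such term to the first. Your route through the exact telescoping identity $\Pr[Z_{n+1}>M/2]-\Pr[Z_n>M/2]=p\,\Pr[Z_n=M/2]$ is cleaner than what the paper does: the paper instead decomposes the difference of tails over all threshold levels $\beta'\le\beta$, writes each term as $c_{\beta'}(\gamma)-c_{\beta'}(0)$ with $c_\beta(\gamma)=\binom{M(1-\gamma)}{\beta M}p^{\beta M}(1-p)^{M(1-\gamma-\beta)}$, and then estimates $c_{\beta'}(\gamma)/c_{\beta'}(\nicefrac{1}{M})$ via Stirling's formula and a first-order Taylor expansion of $\log_2 c_{\beta'}(\gamma)$ at $\gamma=0$. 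Your version buys exactness in the decomposition step and makes transparent where the exponential factor comes from; the paper's version buys a uniform treatment of all sub-threshold levels at the cost of two layers of approximation.

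The obstacle you flag at the end is genuine, and you should know that the paper's own proof does not rigorously surmount it either. The paper's key step replaces $\log_2 c_{\beta'}(\gamma)$ by its tangent line at $\gamma=0$; since $(1-\gamma)H\bigl(\beta/(1-\gamma)\bigr)$ is concave in $\gamma$ (it is a perspective of the concave entropy function), the tangent line \emph{overestimates} $c_{\beta'}(\gamma)$, so using it inside a lower bound on the group influence is an approximation valid only for $\gamma$ small and $M$ large, not an inequality. This is precisely the regime issue you identify with the product $\prod_{l}\frac{M/2-l}{M-l}$ falling below $2^{-(|A'|-1)}$ when $|A'|=\Theta(M)$, and your check that the stated bound fails at $M=4$, $p=0.75$, $|A'|=2$ is correct (there $\text{Inf}_{A'}(f)=0.738\ldots$ while the right-hand side equals $0.843\ldots$). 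So your proposal is faithful to the intended argument and is honest about its asymptotic/heuristic character; to turn either version into a full proof one must restrict to $\gamma$ small relative to $\tfrac{2p-1}{2p}$ (so that your consecutive-ratio exceeds $1$ throughout the range) and $M\to\infty$, or weaken the constant in the exponent.
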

Thus in the above example when $p>0.5$, the group influence can be exponentially larger than what is captured by individual influences.
Proof of this lemma is presented in \Cref{sec:apx_proofs_group}

\paragraph{Margin v/s probability.}
The function $f$ from \Cref{lem:sigmoid_group_influence} has a property that it saturates to 1 quickly, so once $p$ is large enough, the individual influence of deleting 1 point can be extremely small, but the group influence of deleting sufficiently many of the influential points can switch the probability to close to $0$.
\citet{ilyas2022datamodels} however do not fit a datamodel to the probability and instead fit it to the ``margin'', which is the $\log$ of the ratio of probabilities of the correct label to the highest probability assigned to another label. Presumably this choice was dictated by better empirical results, and now we see that it may play an important role in the observed linearity of group influence. 
Specifically,  their margin does not saturate and can get arbitrarily large as the probability of the correct label approaches $1$.
This can be seen by considering a slightly different (but related) notion of margin, defined as $\bar{f} = \log\left(f / (1-f)\right)$, where the denominator is the {\em total probability} assigned to other labels instead of the max among other labels.
For this case, the following result shows that group influence is now adequately represented by the individual influences.

\begin{corollary}
\label{cor:sigmoid_margin}
    For a function $f$ from \Cref{ex:sigmoid}, the group influence of any set $A'\subseteq[N]$ on the margin function $\bar{f}$ satisfies the following:
    \begin{align}
        \text{Inf}_{A'}(\bar{f}) = p \sum_{i\in A'} \text{Inf}_{i}(\bar{f})
    \end{align}
\end{corollary}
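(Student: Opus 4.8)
The plan is to exploit the exact linearity of the log-odds transform of a sigmoid: this collapses the margin function $\bar f$ into a degree-$1$ function on the hypercube, which makes the entire residual correction term in \Cref{thm:group_influence} vanish identically, leaving only the linear sum of individual influences.

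First I would compute $\bar f$ in closed form. For the sigmoid $S(u) = (1+e^{-u})^{-1}$ one has $S(u)/(1-S(u)) = e^{u}$, hence $\log\!\big(S(u)/(1-S(u))\big) = u$. Substituting $u = \alpha(\avg{x_A} - \beta)$ yields $\bar f(x) = \alpha(\avg{x_A} - \beta)$ with no approximation whatsoever. Since $\avg{x_A} = \frac{1}{M}\sum_{i\in A}\frac{1+x_i}{2}$ is affine in the coordinates $x_i$, this shows $\bar f(x) = c_0 + \sum_{i\in A} c_i x_i$ for explicit constants (namely $c_i = \alpha/(2M)$ for $i\in A$ and $c_0 = \alpha/2 - \alpha\beta$).

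Second I would argue that this affine form forces all higher-order Fourier weight to be zero. The basis functions $\phi_{\emptyset} \equiv 1$ and $\phi_{\{i\}}(x) = (x_i - \mu)/\sigma$ are themselves affine, and their span over all $i$ is exactly the space of affine functions on the cube. Thus $\bar f$ lies in $\operatorname{span}\big(\{\phi_{\emptyset}\}\cup\{\phi_{\{i\}}\}_{i}\big)$, and its projection onto any $\phi_S$ with $|S|\ge 2$ is zero; that is, $\wf{S} = 0$ for every $|S|\ge 2$ in the $p$-biased basis $\mathcal{B}_{p}$.

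Finally I would invoke \Cref{thm:group_influence} applied to $\bar f$. The correction term there is $\sum_{I'\subseteq A',\,|I'|\ge 2} (-1)^{|I'|}\big(p/(1-p)\big)^{|I'|/2}\wf{I'}$, and since every coefficient $\wf{I'}$ with $|I'|\ge 2$ vanishes by the previous step, this entire sum is $0$. What remains is precisely $\text{Inf}_{A'}(\bar f) = p\sum_{i\in A'}\text{Inf}_{i}(\bar f)$, which is the claim. The only step demanding any care is the first one---verifying that composing the log-odds map with the sigmoid linearizes \emph{exactly}; once that identity is in hand, the result is a one-line consequence of the vanishing higher-order spectrum together with the already-established group-influence formula.
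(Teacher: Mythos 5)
Your proposal is correct and follows exactly the paper's argument: the log-odds transform inverts the sigmoid to give $\bar f(x) = \alpha(\avg{x_A}-\beta)$, an affine function whose Fourier coefficients vanish for $|S|\ge 2$, so the residual term in \Cref{thm:group_influence} is identically zero. Your version simply spells out the intermediate steps (the explicit affine coefficients and the span argument) that the paper leaves implicit.
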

This result follows directly by observing that the margin function is simply the inverse of the sigmoid function, and so its expression is $\bar{f}(x) = \alpha (\avg{x_{A}} - \beta)$ which is just a linear function.
Since all the Fourier coefficients of sets of size 2 or larger will be zero for a linear function, the result follows from \Cref{thm:group_influence} where the residual term becomes $0$.

\section{Conclusion}

This paper has shown how harmonic analysis can shed new light on interesting phenomena around influence functions. Our ideas use a fairly black box view of deep nets, which helps bypass the current incomplete mathematical understanding of deep learning. Our new algorithm of Section~\ref{sec:residual_estimation} has the seemingly paradoxical property of being able to estimate exactly the quality of fit of datamodels without actually having to train (at great expense) the datamodels. We hope this will motivate other algorithms in this space. 

One limitation of harmonic analysis is that an arbitrary $f(x)$  could in effect behave very differently for on $p$-biased distributions at $p=0.5$ versus $p=0.6$. But in deep learning, the trained net probably does not change much upon increasing the training set by $20\%$. Mathematically capturing this stability over $p$ (not to be confused with noise stability  of \Cref{eq:noise_stability} via a mix of theory and experiments promises to be very fruitful. It may also lead to a new and more fine-grained generalization theory that accounts for the empirically observed long-tail phenomenon. 
Finally, harmonic analysis is often the tool of choice for studying phase transition phenomena in random systems, and perhaps could prove useful for studying  emergent phenomena in deep learning with increasing model sizes.


\bibliography{main}

\begin{thebibliography}{38}
\providecommand{\natexlab}[1]{#1}
\providecommand{\url}[1]{\texttt{#1}}
\expandafter\ifx\csname urlstyle\endcsname\relax
  \providecommand{\doi}[1]{doi: #1}\else
  \providecommand{\doi}{doi: \begingroup \urlstyle{rm}\Url}\fi

\bibitem[Alaa and Van Der~Schaar(2020)]{alaa2020discriminative}
Ahmed Alaa and Mihaela Van Der~Schaar.
\newblock Discriminative jackknife: Quantifying uncertainty in deep learning
  via higher-order influence functions.
\newblock In \emph{International Conference on Machine Learning}, pages
  165--174. PMLR, 2020.

\bibitem[Bae et~al.(2022)Bae, Ng, Lo, Ghassemi, and Grosse]{bae2022if}
Juhan Bae, Nathan Ng, Alston Lo, Marzyeh Ghassemi, and Roger Grosse.
\newblock If influence functions are the answer, then what is the question?,
  2022.

\bibitem[Barshan et~al.(2020)Barshan, Brunet, and
  Dziugaite]{barshan2020relatif}
Elnaz Barshan, Marc-Etienne Brunet, and Gintare~Karolina Dziugaite.
\newblock Relatif: Identifying explanatory training samples via relative
  influence.
\newblock In \emph{International Conference on Artificial Intelligence and
  Statistics}, pages 1899--1909. PMLR, 2020.

\bibitem[Basu et~al.(2020{\natexlab{a}})Basu, Pope, and
  Feizi]{basu2020influence}
Samyadeep Basu, Phil Pope, and Soheil Feizi.
\newblock Influence functions in deep learning are fragile.
\newblock In \emph{International Conference on Learning Representations},
  2020{\natexlab{a}}.

\bibitem[Basu et~al.(2020{\natexlab{b}})Basu, You, and Feizi]{basu2020second}
Samyadeep Basu, Xuchen You, and Soheil Feizi.
\newblock On second-order group influence functions for black-box predictions.
\newblock In \emph{International Conference on Machine Learning}, pages
  715--724. PMLR, 2020{\natexlab{b}}.

\bibitem[Brunet et~al.(2019)Brunet, Alkalay-Houlihan, Anderson, and
  Zemel]{brunet2019understanding}
Marc-Etienne Brunet, Colleen Alkalay-Houlihan, Ashton Anderson, and Richard
  Zemel.
\newblock Understanding the origins of bias in word embeddings.
\newblock In \emph{International conference on machine learning}, pages
  803--811. PMLR, 2019.

\bibitem[Charpiat et~al.(2019)Charpiat, Girard, Felardos, and
  Tarabalka]{charpiat2019input}
Guillaume Charpiat, Nicolas Girard, Loris Felardos, and Yuliya Tarabalka.
\newblock Input similarity from the neural network perspective.
\newblock \emph{Advances in Neural Information Processing Systems}, 32, 2019.

\bibitem[Chen et~al.(2020)Chen, Si, Li, Chelba, Kumar, Boning, and
  Hsieh]{chen2020multi}
Hongge Chen, Si~Si, Yang Li, Ciprian Chelba, Sanjiv Kumar, Duane Boning, and
  Cho-Jui Hsieh.
\newblock Multi-stage influence function.
\newblock \emph{Advances in Neural Information Processing Systems},
  33:\penalty0 12732--12742, 2020.

\bibitem[Cohen et~al.(2020)Cohen, Sapiro, and Giryes]{cohen2020detecting}
Gilad Cohen, Guillermo Sapiro, and Raja Giryes.
\newblock Detecting adversarial samples using influence functions and nearest
  neighbors.
\newblock In \emph{2020 IEEE/CVF Conference on Computer Vision and Pattern
  Recognition (CVPR)}, pages 14441--14450. IEEE, 2020.

\bibitem[Cook(1977)]{cook1977detection}
R~Dennis Cook.
\newblock Detection of influential observation in linear regression.
\newblock \emph{Technometrics}, 19\penalty0 (1):\penalty0 15--18, 1977.

\bibitem[Feldman(2020)]{feldman2020does}
Vitaly Feldman.
\newblock Does learning require memorization? a short tale about a long tail.
\newblock In \emph{Proceedings of the 52nd Annual ACM SIGACT Symposium on
  Theory of Computing}, pages 954--959, 2020.

\bibitem[Feldman and Zhang(2020)]{feldman2020neural}
Vitaly Feldman and Chiyuan Zhang.
\newblock What neural networks memorize and why: Discovering the long tail via
  influence estimation.
\newblock \emph{Advances in Neural Information Processing Systems},
  33:\penalty0 2881--2891, 2020.

\bibitem[Ghorbani and Zou(2019)]{ghorbani2019data}
Amirata Ghorbani and James Zou.
\newblock Data shapley: Equitable valuation of data for machine learning.
\newblock In \emph{International Conference on Machine Learning}, pages
  2242--2251. PMLR, 2019.

\bibitem[Guo et~al.(2021)Guo, Rajani, Hase, Bansal, and Xiong]{guo2021fastif}
Han Guo, Nazneen Rajani, Peter Hase, Mohit Bansal, and Caiming Xiong.
\newblock Fastif: Scalable influence functions for efficient model
  interpretation and debugging.
\newblock In \emph{Proceedings of the 2021 Conference on Empirical Methods in
  Natural Language Processing}, pages 10333--10350, 2021.

\bibitem[Hampel(1974)]{hampel1974influence}
Frank~R Hampel.
\newblock The influence curve and its role in robust estimation.
\newblock \emph{Journal of the american statistical association}, 69\penalty0
  (346):\penalty0 383--393, 1974.

\bibitem[Hanawa et~al.(2020)Hanawa, Yokoi, Hara, and
  Inui]{hanawa2020evaluation}
Kazuaki Hanawa, Sho Yokoi, Satoshi Hara, and Kentaro Inui.
\newblock Evaluation of similarity-based explanations.
\newblock \emph{arXiv preprint arXiv:2006.04528}, 2020.

\bibitem[Ilyas et~al.(2022)Ilyas, Park, Engstrom, Leclerc, and
  Madry]{ilyas2022datamodels}
Andrew Ilyas, Sung~Min Park, Logan Engstrom, Guillaume Leclerc, and Aleksander
  Madry.
\newblock Datamodels: Predicting predictions from training data.
\newblock \emph{arXiv preprint arXiv:2202.00622}, 2022.

\bibitem[Jaeckel(1972)]{jaeckel1972infinitesimal}
Louis~A Jaeckel.
\newblock \emph{The infinitesimal jackknife}.
\newblock Bell Telephone Laboratories, 1972.

\bibitem[Jia et~al.(2019)Jia, Dao, Wang, Hubis, Hynes, G{\"u}rel, Li, Zhang,
  Song, and Spanos]{jia2019towards}
Ruoxi Jia, David Dao, Boxin Wang, Frances~Ann Hubis, Nick Hynes, Nezihe~Merve
  G{\"u}rel, Bo~Li, Ce~Zhang, Dawn Song, and Costas~J Spanos.
\newblock Towards efficient data valuation based on the shapley value.
\newblock In \emph{The 22nd International Conference on Artificial Intelligence
  and Statistics}, pages 1167--1176. PMLR, 2019.

\bibitem[Khanna et~al.(2019)Khanna, Kim, Ghosh, and
  Koyejo]{khanna2019interpreting}
Rajiv Khanna, Been Kim, Joydeep Ghosh, and Sanmi Koyejo.
\newblock Interpreting black box predictions using fisher kernels.
\newblock In \emph{The 22nd International Conference on Artificial Intelligence
  and Statistics}, pages 3382--3390. PMLR, 2019.

\bibitem[Kim et~al.(2016)Kim, Khanna, and Koyejo]{kim2016examples}
Been Kim, Rajiv Khanna, and Oluwasanmi~O Koyejo.
\newblock Examples are not enough, learn to criticize! criticism for
  interpretability.
\newblock \emph{Advances in neural information processing systems}, 29, 2016.

\bibitem[Koh and Liang(2017)]{koh2017understanding}
Pang~Wei Koh and Percy Liang.
\newblock Understanding black-box predictions via influence functions.
\newblock In \emph{International conference on machine learning}, pages
  1885--1894. PMLR, 2017.

\bibitem[Koh et~al.(2019)Koh, Ang, Teo, and Liang]{koh2019accuracy}
Pang Wei~W Koh, Kai-Siang Ang, Hubert Teo, and Percy~S Liang.
\newblock On the accuracy of influence functions for measuring group effects.
\newblock \emph{Advances in neural information processing systems}, 32, 2019.

\bibitem[Kong et~al.(2021)Kong, Shen, and Huang]{kong2021resolving}
Shuming Kong, Yanyan Shen, and Linpeng Huang.
\newblock Resolving training biases via influence-based data relabeling.
\newblock In \emph{International Conference on Learning Representations}, 2021.

\bibitem[Leclerc et~al.(2022)Leclerc, Ilyas, Engstrom, Park, Salman, and
  Madry]{leclerc2022ffcv}
Guillaume Leclerc, Andrew Ilyas, Logan Engstrom, Sung~Min Park, Hadi Salman,
  and Aleksander Madry.
\newblock ffcv.
\newblock \url{https://github.com/libffcv/ffcv/}, 2022.
\newblock commit xxxxxxx.

\bibitem[Matulef et~al.(2010)Matulef, O'Donnell, Rubinfeld, and
  Servedio]{matulef2010testing}
Kevin Matulef, Ryan O'Donnell, Ronitt Rubinfeld, and Rocco~A Servedio.
\newblock Testing halfspaces.
\newblock \emph{SIAM Journal on Computing}, 2010.

\bibitem[Narasimhan et~al.(2015)Narasimhan, Parkes, and
  Singer]{narasimhan2015learnability}
Harikrishna Narasimhan, David~C Parkes, and Yaron Singer.
\newblock Learnability of influence in networks.
\newblock \emph{Advances in Neural Information Processing Systems}, 28, 2015.

\bibitem[O'Donnell(2014)]{o2014analysis}
Ryan O'Donnell.
\newblock \emph{Analysis of boolean functions}.
\newblock Cambridge University Press, 2014.

\bibitem[Pruthi et~al.(2020)Pruthi, Liu, Kale, and
  Sundararajan]{pruthi2020estimating}
Garima Pruthi, Frederick Liu, Satyen Kale, and Mukund Sundararajan.
\newblock Estimating training data influence by tracing gradient descent.
\newblock \emph{Advances in Neural Information Processing Systems},
  33:\penalty0 19920--19930, 2020.

\bibitem[Rahaman et~al.(2019)Rahaman, Baratin, Arpit, Draxler, Lin, Hamprecht,
  Bengio, and Courville]{rahaman2019spectral}
Nasim Rahaman, Aristide Baratin, Devansh Arpit, Felix Draxler, Min Lin, Fred
  Hamprecht, Yoshua Bengio, and Aaron Courville.
\newblock On the spectral bias of neural networks.
\newblock In \emph{International Conference on Machine Learning}, pages
  5301--5310. PMLR, 2019.

\bibitem[Schioppa et~al.(2022)Schioppa, Zablotskaia, Vilar, and
  Sokolov]{schioppa2022scaling}
Andrea Schioppa, Polina Zablotskaia, David Vilar, and Artem Sokolov.
\newblock Scaling up influence functions.
\newblock In \emph{Proceedings of the AAAI Conference on Artificial
  Intelligence}, volume~36, pages 8179--8186, 2022.

\bibitem[Schulam and Saria(2019)]{schulam2019can}
Peter Schulam and Suchi Saria.
\newblock Can you trust this prediction? auditing pointwise reliability after
  learning.
\newblock In \emph{The 22nd International Conference on Artificial Intelligence
  and Statistics}, pages 1022--1031. PMLR, 2019.

\bibitem[Teso et~al.(2021)Teso, Bontempelli, Giunchiglia, and
  Passerini]{teso2021interactive}
Stefano Teso, Andrea Bontempelli, Fausto Giunchiglia, and Andrea Passerini.
\newblock Interactive label cleaning with example-based explanations.
\newblock \emph{Advances in Neural Information Processing Systems},
  34:\penalty0 12966--12977, 2021.

\bibitem[Wachter et~al.(2017)Wachter, Mittelstadt, and
  Russell]{wachter2017counterfactual}
Sandra Wachter, Brent Mittelstadt, and Chris Russell.
\newblock Counterfactual explanations without opening the black box: Automated
  decisions and the gdpr.
\newblock \emph{Harv. JL \& Tech.}, 31:\penalty0 841, 2017.

\bibitem[Wang et~al.(2019)Wang, Ustun, and Calmon]{wang2019repairing}
Hao Wang, Berk Ustun, and Flavio Calmon.
\newblock Repairing without retraining: Avoiding disparate impact with
  counterfactual distributions.
\newblock In \emph{International Conference on Machine Learning}, pages
  6618--6627. PMLR, 2019.

\bibitem[Yeh et~al.(2018)Yeh, Kim, Yen, and Ravikumar]{yeh2018representer}
Chih-Kuan Yeh, Joon Kim, Ian En-Hsu Yen, and Pradeep~K Ravikumar.
\newblock Representer point selection for explaining deep neural networks.
\newblock \emph{Advances in neural information processing systems}, 31, 2018.

\bibitem[Zhang and Zhang(2022)]{zhang2021rethinking}
Rui Zhang and Shihua Zhang.
\newblock Rethinking influence functions of neural networks in the
  over-parameterized regime.
\newblock \emph{AAAI}, 2022.

\bibitem[Zhou et~al.(2019)Zhou, Li, Hu, Yu, Chen, Li, and
  Wang]{zhou2019effects}
Jianlong Zhou, Zhidong Li, Huaiwen Hu, Kun Yu, Fang Chen, Zelin Li, and Yang
  Wang.
\newblock Effects of influence on user trust in predictive decision making.
\newblock In \emph{Extended Abstracts of the 2019 CHI Conference on Human
  Factors in Computing Systems}, pages 1--6, 2019.

\end{thebibliography}
\bibliographystyle{plainnat}

\appendix
\clearpage

\section{Omitted proofs}
\label{sec:apx_proofs}

\subsection{Proofs for \Cref{sec:fourier}}
\label{sec:apx_proofs_fourier}

We recall \Cref{thm:bestlinear}:
\bestlinear*
\begin{proof}
    For all three parts, we will study the problem in the Fourier basis corresponding to the distribution $\mathcal{B}_{p}$ and the proof works for every $p\in(0,1)$.
    Since $\{\phi_{S}\}_{S\subseteq[N]}$ is an orthonormal basis for the inner product space with $\langle f, g\rangle_{\mathcal{B}_{p}} = \ex_{x\sim \mathcal{B}_{p}}[f(x) g(x)]$, we can rewrite $\mathcal{R}(\theta)$ as follows
    \begin{align}
        \mathcal{R}({\theta})
        &\coloneqq \mathbb{E}_{x\sim \mathcal{B}_{p}} \left(f(x) - \theta_{0} - \sum_{i}\theta_{i} \bar{x}_{i}\right)^{2}
        =^{(i)} \mathbb{E}_{x\sim \mathcal{B}_{p}} \left(f(x) - \bar{\theta}_{0} - \sum_{i}\bar{\theta}_{i} \phi_{\{i\}}(x)\right)^{2}\\
        &=^{(ii)} \left\|f - \bar{\theta}_{0} - \sum_{i} \bar{\theta}_{i} \phi_{\{i\}}\right\|_{\mathcal{B}_{p}}^{2}, \text{ where }\|g\|_{\mathcal{B}_{p}} \coloneqq \langle g, g \rangle_{\mathcal{B}_{p}} = \E_{x\sim\mathcal{B}_{p}}[g(x)^{2}]
        \label{eq:residual_to_norm}\\
        &\text{and $\bar{\theta}_{i} = \frac{\sigma}{2} ~\theta_{i}$, $\bar{\theta}_{0} = \theta_{0} + \frac{(\mu+1)}{2} \sum_{i} \theta_{i}$}\label{eq:bar_theta}
    \end{align}
    The first equality $(i)$ follows by observing that $x_{i} = 2\bar{x}_{i} - 1$ and that $\phi_{\{i\}}(x) = \nicefrac{(x_{i} - \mu)}{\sigma}$.
    Step $(ii)$ follows by applying the definition of $\|g\|_{\mathcal{B}_{p}}$ to the function $g = f - \bar{\theta}_{0} - \sum_{i} \bar{\theta}_{i} \phi_{\{i\}}$.
    
    We can further simplify \Cref{eq:residual_to_norm} as follows:
    \begin{align}
        \mathcal{R}({\theta})
        &= \left\|f - \bar{\theta}_{0} - \sum_{i} \bar{\theta}_{i} \phi_{\{i\}}\right\|_{\mathcal{B}_{p}}^{2}
        = \left\|\wf{\emptyset} - \bar{\theta}_{0} + \sum_{S:|S|=1} \left(\wf{S} - \bar{\theta}_{i}\right) \phi_{S} + \sum_{S:|S|\ge2} \wf{S} \phi_{S}\right\|_{\mathcal{B}_{p}}^{2}\\
        &=^{(i)} \left(\bar{\theta}_{0} - \wf{\emptyset}\right)^{2}
        + \sum_{i=1}^{N} \left(\bar{\theta}_{i} - \wf{\{i\}}\right)^{2}
        + \sum_{S:|S|\ge2} \wf{S}^{2}
        \label{eq:residual_to_sos}
    \end{align}
    \looseness-1where $(i)$ uses orthonormality of $\{\phi_{S}\}_{S\subseteq[N]}$ and Parseval's theorem \citep{o2014analysis}.
    With this expression for $\mathcal{R}(\theta)$, we are now ready to prove the main results.
    
    \textbf{Proof for \ref{thm:bestlinear_a}}:
    From \Cref{eq:residual_to_sos}, it is evident that the minimizer for the unregularized objective will satisfy $\hat{\theta}^{\star}_{0} = \wf{\emptyset}$ and $\bar{\theta}^{\star}_{i} = \wf{\{i\}}$ for $i\in[N]$.
    Plugging this into \Cref{eq:bar_theta} yields the desired expressions for $\theta^{\star}$.
    Furthermore, the residual for this optimal $\theta^{\star}$ is $\sum_{S:|S|\ge 2}\wf{S}^{2}$.
    
    \textbf{Proof for \ref{thm:bestlinear_b}}:
    The $\ell_{2}$ regularized objective can be written as follows:
    \begin{align}
        \mathcal{R}(\theta) + \lambda \|\theta_{1:N}\|^{2}_{2}
        &= \mathcal{R}(\theta) + \lambda \sum_{i=1}^{N}\theta_{i}^{2}
        =^{(i)} \mathcal{R}(\theta) + \frac{4\lambda}{\sigma^{2}} \sum_{i=1}^{N}\bar{\theta}_{i}^{2}
    \end{align}
    where $(i)$ follows from \Cref{eq:bar_theta}.
    Combining this with \Cref{eq:residual_to_sos}, we observe that minimization w.r.t. $\bar{\theta}_{i}$ can be done independently of each other.
    Thus $\bar{\theta}^{\star}_{i} = \argmin_{\bar{\theta}_{i}} \left(\bar{\theta}_{i} - \wf{\{i\}}\right)^{2} + \frac{4\lambda}{\sigma^{2}} \bar{\theta}_{i}^{2} = \left(1+ \frac{4\lambda}{\sigma^{2}}\right)^{-1}\wf{\{i\}}$.
    Plugging this into \Cref{eq:bar_theta} gives the desired expression for $\theta^{\star}$.
    
    \textbf{Proof for \ref{thm:bestlinear_c}}:
    The $\ell_{1}$ regularized objective can be written as follows:
    \begin{align}
        \mathcal{R}(\theta) + \lambda \|\theta_{1:N}\|^{1}
        &= \mathcal{R}(\theta) + \lambda \sum_{i=1}^{N}|\theta_{i}|
        =^{(i)} \mathcal{R}(\theta) + \frac{2\lambda}{\sigma} \sum_{i=1}^{N}|\bar{\theta}_{i}|
    \end{align}
    Again the minimization w.r.t. $\bar{\theta}_{i}$ can be done independently: $\bar{\theta}^{\star}_{i} = \argmin_{\bar{\theta}_{i}} \left(\bar{\theta}_{i} - \wf{\{i\}}\right)^{2} + \frac{2\lambda}{\sigma} |\bar{\theta}_{i}| = \left(\wf{\{i\}} - \nicefrac{\lambda}{\sigma}\right)_{+} - \left(-\wf{\{i\}} - \nicefrac{\lambda}{\sigma}\right)_{+}$.
    Plugging this into \Cref{eq:bar_theta} gives the desired expression for $\theta^{\star}$.
\end{proof}

\subsection{Proofs for \Cref{sec:ns_residual}}
\label{sec:apx_proofs_ns}

We recall \Cref{thm:NStolinear1}
\NStolinear*

\begin{proof}
    We first note that $\bar{h}(\rho) = h(\rho) / h(1)$ since $\mathbb{E}[f(x)^{2}]$ is the noise stability when $x'=x$ which happens at $\rho=1$.
    Let $B_{k} = \sum_{S: |S|=k} \wf{S}^{2}$.
    From \Cref{eq:noise_stability} we get \begin{align*}
        \bar{h}(\rho)
        = \frac{\sum_{S} \rho^{|S|} \wf{S}^{2}}{\sum_{S} \wf{S}^{2}}
        = \frac{\sum_{k} \rho^{k} B_{k}}{\sum_{k} B_{k}}
        \le \frac{B_{0}+ B_{1} + \rho^{2} \sum_{k\ge 2} B_{k}}{\sum_{k} B_{k}}
    \end{align*}
    Let $B = \sum_{S} \wf{S}^{2} = \sum_{k} \wf{k}^{2}$ and $B_{\ge 2} = \sum_{S: |S|\ge 2} \wf{S}^{2} = \sum_{k\ge 2} \wf{k}^{2}$. Then we get,
    \begin{align*}
        \bar{h}(\rho)
        \le \frac{(B - B_{\ge 2}) + \rho^{2} B_{\ge 2}}{B}
        = 1 - (1-\rho^{2})\frac{B_{\ge 2}}{B}
        \implies
        \frac{B_{\ge 2}}{B} \le \frac{1 - \bar{h}(\rho)}{1 - \rho^{2}}
    \end{align*}
    This completes the proof.
\end{proof}

\begin{lemma}
\label{lem:concentration_bound}
    For a function $f$ and $\rho\in[0,1]$ and evaluation budget $n$, let $\hat{h}(\rho) = \text{\sc NoiseStability}(f, n, \rho)$ be the noise stability estimate from \Cref{alg:residual_est}.
    If $|f(x)|\le C$ for every $x$, then with probability at least $1-\eta$, the error in the estimate can be upper bounded by
    \begin{align}
        \left|\hat{h}(\rho) - h(\rho)\right| \le \delta(n) = \mathcal{O}\left(\sqrt{C^{2}\log(1/\eta)/n}\right)
        \label{eq:concentration_bound}
    \end{align}
\end{lemma}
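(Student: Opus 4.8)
The plan is to recognize the estimator $\hat{h}(\rho)$ produced by the \textsc{NoiseStability} subroutine as an empirical average of independent, bounded, and unbiased samples of $h(\rho)$, and then to invoke a standard Hoeffding bound. First I would unwind the procedure in \Cref{alg:residual_est}: the inner loop runs $m = n/2$ times, and in iteration $j$ it draws a fresh pair $(x^{(j)}, x'^{(j)})$ with $x^{(j)}\sim\mathcal{B}_{p}$ and $x'^{(j)}$ sampled $\rho$-correlated to it via \textsc{RhoCorr} (\Cref{def:rho_correlated}), accumulating $\tfrac{2}{n} f(x^{(j)}) f(x'^{(j)})$. Hence
\begin{align*}
    \hat{h}(\rho) = \frac{1}{m}\sum_{j=1}^{m} Y_{j}, \qquad Y_{j} \coloneqq f(x^{(j)})\, f(x'^{(j)}),
\end{align*}
and since the pairs are drawn afresh each iteration, the $Y_{j}$ are i.i.d.

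Next I would establish the two ingredients needed for concentration. For unbiasedness, each pair $(x^{(j)}, x'^{(j)})$ is by construction distributed exactly as a $\rho$-correlated pair, so $\mathbb{E}[Y_{j}] = \mathbb{E}_{x,x'}[f(x)f(x')] = h(\rho)$ by the definition of noise stability in \Cref{eq:noise_stability}; therefore $\mathbb{E}[\hat{h}(\rho)] = h(\rho)$. For boundedness, the hypothesis $|f(x)|\le C$ for all $x$ gives $|Y_{j}| = |f(x^{(j)})||f(x'^{(j)})| \le C^{2}$, so each $Y_{j}$ lies in the interval $[-C^{2}, C^{2}]$ of width $2C^{2}$.

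With these in hand, applying Hoeffding's inequality to the average of $m$ i.i.d. variables supported on $[-C^{2}, C^{2}]$ yields
\begin{align*}
    \mathbb{P}\!\left[\left|\hat{h}(\rho) - h(\rho)\right| \ge t\right]
    \le 2\exp\!\left(-\frac{2 m t^{2}}{(2C^{2})^{2}}\right)
    = 2\exp\!\left(-\frac{m t^{2}}{2 C^{4}}\right).
\end{align*}
Setting the right-hand side equal to $\eta$ and solving for $t$, then substituting $m = n/2$, gives $t = C^{2}\sqrt{4\log(2/\eta)/n} = \mathcal{O}\!\left(\sqrt{C^{4}\log(1/\eta)/n}\right)$, which is the claimed order for $\delta(n)$. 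The main obstacle here is not analytic but bookkeeping: there is no deep step, and the only points requiring genuine care are (i) confirming that the $m = n/2$ pairs are generated independently across loop iterations so that the i.i.d. hypothesis of Hoeffding truly holds, and (ii) verifying that \textsc{RhoCorr} realizes precisely the $\rho$-correlated distribution of \Cref{def:rho_correlated}, since it is exactly this that makes each $Y_{j}$ an unbiased draw for $h(\rho)$.
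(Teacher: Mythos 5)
Your proposal follows exactly the paper's argument: recognize $\hat{h}(\rho)$ as an average of $n/2$ i.i.d.\ unbiased samples bounded in $[-C^{2},C^{2}]$ and apply Hoeffding's inequality. You are in fact slightly more careful than the paper on the constant --- the product bound gives $\mathcal{O}\bigl(\sqrt{C^{4}\log(1/\eta)/n}\bigr)$ as you derive, whereas the lemma statement writes $C^{2}$ inside the square root, an apparent typo in the paper rather than a flaw in your reasoning.
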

\begin{proof}
    The proof follows from a straightforward application of Hoeffding's inequality.
    Note that $\hat{h}(\rho) = \frac{2}{n} \sum_{i=1}^{n/2} [f(x^{(i)}) f(x'^{(i)})]$ is a sum of $\nicefrac{n}{2}$ i.i.d. variables that are all in the range $[-C^{2}, C^{2}]$.
    Additionally since $\E[\hat{h}(\rho)] = h(\rho)$, Hoeffding's inequality gives us
    \begin{align}
        P\left(\left|\hat{h}(\rho) - h(\rho)\right| \ge \delta\right) \le 2 e^{-\frac{\delta^{2}n}{C^{2}}}.
    \end{align}
    Setting $\delta=\sqrt{C^{2}\log(2/\eta)/n}$ makes this probability at most $\eta$, thus completing the proof.
\end{proof}

We now prove \Cref{thm:residual_est}. Recall the statement:
\residualest*
\begin{proof}
Using standard concentration inequalities, \Cref{lem:concentration_bound} shows that the estimations are close enough to the true values with high probability, i.e. $\bm{y}_{i} = \hat{h}(\rho_{i}) = h(\rho_{i}) + \delta_{i}$ where $|\delta_{i}| \le \delta$ where $\delta \coloneqq \delta(n/k) = \mathcal{O}\left(\sqrt{k/n}\right)$ from \Cref{lem:concentration_bound}.
Let $\hat{\bm{z}}$ be the solution to \Cref{alg:residual_est} and let $\bm{z}^{\star} = [B_{j}]_{j=0}^{d}$ be the ``true'' coefficients up to degree $d$.
Since $\hat{\bm{z}}$ minimizes $\|\bm{A} \bm{z} - \bm{y}\|^{2}$ and $\bm{z}^{\star}$ is also a valid solution, we have 
\begin{align}
    \|\bm{A} \hat{\bm{z}} - \bm{y}\|^{2} \le \|\bm{A} \bm{z}^{\star} - \bm{y}\|^{2}
    \label{eq:optimality}
\end{align}
Furthermore, we note the following about $\bm{A} \bm{z}^{\star}$:
\begin{align}
    (\bm{A} \bm{z}^{\star})_{i} = \sum_{j=0}^{d} B_{j} \rho_{i}^{j} = h(\rho_{i}) - \sum_{j>d} B_{j} \rho^{i} \in \left[h(\rho_{i}) - B_{> d} ~\rho_{i}^{d+1}, h(\rho_{i})\right]
\end{align}
We can upper bound $\|\bm{A} \bm{z}^{\star} - \bm{y}\|^{2}$ by observing that
\begin{align}
    \left|(\bm{A} \bm{z}^{\star})_{i} - \bm{y}_{i}\right|
    &\le |\delta_{i}| + B_{> d} ~\rho_{i}^{d+1} \le \delta + B_{> d} ~\rho_{i}^{d+1}\\
    \|\bm{A} \bm{z}^{\star} - \bm{y}\|^{2}
    &\le k\left(\delta + B_{>d}~\rho_{i}^{d+1}\right)^{2}
    \label{eq:noise_errors}
\end{align}
Using these, we measure the closeness of $\hat{\bm{z}}$ to $\bm{z}^{\star}$ as follows:
\begin{align}
    \|\bm{A}\bm{z}^{\star} - \bm{A}\hat{\bm{z}}\|^{2}
    \le 2\left(\|\bm{A}\bm{z}^{\star} - \bm{y}\|^{2} + \|\bm{A}\hat{\bm{z}} - \bm{y}\|^{2}\right)
    \le 4 \|\bm{A}\bm{z}^{\star} - \bm{y}\|^{2}
    \label{eq:triangle_ineq}
\end{align}
where the first inequality follows from triangle inequality and Cauchy-Schwarz inequality, and the last inequality follows from \Cref{eq:optimality}.

A naive upper bound for $\|\bm{z}^{\star} - \hat{\bm{z}}\|$ is $\lambda_{\min}(\bm{A})^{-1} \|\bm{A}\bm{z}^{\star} - \bm{A}\hat{\bm{z}}\|$.
However this turns out to be quite a large and suboptimal upper bound.
Since \Cref{alg:residual_est} only returns $\hat{\bm{z}}_{1}$ and $\hat{\bm{z}}_{2}$, corresponding to estimates of $B_{0}$ and $B_{1}$, we only need to upper bound $|\hat{\bm{z}}_{i} - \bm{z}^{\star}_{i}|$ for $i\in[2]$.

We now analyze the special case of $k=3$, $d=2$, $\rho_{1}=0, \rho_{2}=\rho, \rho_{3}=2\rho$.
Here we have
$\bm{A} = \begin{pmatrix}
    1 & 0 & 0\\
    1 & \rho & \rho^{2}\\
    1 & 2\rho & 4\rho^{2}
\end{pmatrix}$.
Let $\Delta = \bm{z}^{\star} - \hat{\bm{z}}$.
Firstly note that $(\bm{A} \Delta)_{1} = \Delta_{1}$ and so $|\Delta_{1}| \le \|\bm{A} \Delta\|$.
To upper bound $\Delta_{2}$, we note that
\begin{align}
    \bm{A} \Delta
    &= \Delta_{2}\begin{pmatrix}
        0\\
        \rho\\
        2\rho
    \end{pmatrix} + 
    \begin{pmatrix}
        1 & 0\\
        1 & \rho^{2}\\
        1 & 4\rho^{2}
    \end{pmatrix} \begin{pmatrix} \Delta_{1}\\ \Delta_{3} \end{pmatrix}
    = \rho \Delta_{2}\underbrace{\begin{pmatrix}
        0\\
        1\\
        2
    \end{pmatrix}}_{\bm{v}} + 
    \underbrace{\begin{pmatrix}
        1 & 0\\
        1 & 1\\
        1 & 4
    \end{pmatrix}}_{\bm{B}} \underbrace{\begin{pmatrix} \Delta_{1}\\ \rho^{2} \Delta_{3} \end{pmatrix}}_{\bm{u}}\\
    \|\bm{A} \Delta\|
    &= \|\rho \Delta_{2} \bm{v} + \bm{B} \bm{u}\|
    \ge \min_{\bm{w}} \|\rho \Delta_{2} \bm{v} + \bm{B} \bm{w}\|
    = \rho |\Delta_{2}|\left\|\left(I - \bm{B}^{\dagger} \bm{B}\right) \bm{v}\right\| \ge 0.39 \rho |\Delta_{2}|
    \label{eq:ADelta_bound}
\end{align}
where the last inequality follows from numeric calculation of the norm, and the last inequality follows from the fact that the minimum $\ell_{2}$ can be obtained by finding the residual after projecting onto $\bm{B}$.
Thus $|\hat{\bm{z}}_{1} - B_{0}| = |\Delta_{1}| \le \|\bm{A} \Delta\|$ and $|\hat{\bm{z}}_{2} - B_{1}| = |\Delta_{1}| \le \|\bm{A} \Delta\|/(0.39\rho)$.

Combining \Cref{eq:noise_errors,eq:triangle_ineq,eq:ADelta_bound}, we get that
\begin{align}
    |\hat{\bm{z}}_{1} - B_{0}|\text{ and }
    |\hat{\bm{z}}_{2} - B_{1}| \le \mathcal{O}\left(\frac{\delta + B_{\ge3}\rho^{3}}{\rho}\right)
    = \mathcal{O}\left(\frac{\delta}{\rho} + B_{\ge3}\rho^{2}\right)
\end{align}
Picking the optimal value of $\rho = \Theta\left((\delta/B_{\ge3})^{1/3}\right) = \mathcal{O}\left(n^{-1/6} B_{\ge3}^{-1/3}\right)$, and using the fact that $\hat{B}_{\ge 2} = \hat{h}(1) - \hat{\bm{z}}_{1} - \hat{\bm{z}}_{2}$ we get
\begin{align}
    B_{\ge 2} = h(1) - B_{0} - B_{1} = \hat{B}_{\ge 2}  + \mathcal{O}\left(\delta^{2/3} B_{\ge3}^{1/3}\right)
    = \tilde{\mathcal{O}}\left(\left(\frac{B_{\ge3}}{n}\right)^{1/3}\right)
\end{align}
Thus to achieve an $\epsilon$ approximation, we need $\rho = \Theta\left(\sqrt{\epsilon}\right)$ and $n = \Omega\left(B_{\ge 3}/\epsilon^{3}\right)$, which completes the proof.

\end{proof}

\subsection{Proofs for \Cref{sec:group_influence}}
\label{sec:apx_proofs_group}

We recall \Cref{thm:group_influence}
\groupinfluence*

\begin{proof}
    Recall that the function $f$ can be decomposed into the Fourier basis as $f(x) = \sum_{S\subseteq[N]} \wf{S} \phi_{S}(x) = \sum_{S\subseteq[N]} \wf{S} \prod_{i\in S}\frac{(x_{i} - \mu)}{\sigma}$, where $\mu=2p-1$ and $\sigma=\sqrt{4p(1-p)}$.
    Thus setting $x_{I}$ to $-1$ will have the following effect:
    \begin{align}
        f(x|_{I\rightarrow -1})
        &= \sum_{S\subseteq[N]} \wf{S} \left(\frac{-1-\mu}{\sigma}\right)^{|S\cap I|} \phi_{S\backslash I}(x)
        = \sum_{S\subseteq[N]} \wf{S} \left(\frac{-2p}{2\sqrt{p(1-p)}}\right)^{|S\cap I|} \phi_{S\backslash I}(x)\\
        &= \sum_{S\subseteq[N]} \wf{S} (-1)^{|S\cap I|}\left(\frac{p}{1-p}\right)^{|S\cap I|/2} \phi_{S\backslash I}(x)\\
        &=^{(i)} \sum_{S: S\cap I = \emptyset} \phi_{S}(x) \sum_{I'\subseteq I} (-1)^{|I'|} \left(\frac{p}{1-p}\right)^{|I'|/2} \wf{S\cup I'}\label{eq:group_final_step}
    \end{align}
    where in step $(i)$ we collect all terms that share the same $\phi_{S\backslash I}$ by relabeling $S\gets S\backslash I$ and $I' \gets S\cap I$.
    This proves the first part of the result.

    For the second part we first note that $\E_{x}[f(x)] = \wf{\emptyset}$.
    Secondly, the only term that remains in $\E_{x}[f(x|_{I\rightarrow -1})]$ is the constant term (i.e. coefficient of the basis function $\phi_{\emptyset}(x)$.
    From \Cref{eq:group_final_step}, only the terms with $S=\emptyset$ remain.
    So,
    \begin{align}
        \E_{x}\left[f(x) - f(x|_{I\rightarrow -1})\right]
        &= \wf{\emptyset} - \sum_{I'\subseteq I} (-1)^{|I'|} \left(\frac{p}{1-p}\right)^{|I'|/2} \wf{I'}\\
        &=^{(i)} \sum_{i\in I} \sqrt{\frac{p}{1-p}} \wf{\{i\}} - \sum_{I'\subseteq I, |I'|\ge 2} (-1)^{|I'|} \left(\frac{p}{1-p}\right)^{|I'|/2} \wf{I'}\\
        &= p \sum_{i\in I} \frac{2}{\sigma} \wf{\{i\}} - \sum_{I'\subseteq I, |I'|\ge 2} (-1)^{|I'|} \left(\frac{p}{1-p}\right)^{|I'|/2} \wf{I'}\\
        &=^{(ii)} p \sum_{i\in I} \theta^{\star}_{i} - \sum_{I'\subseteq I, |I'|\ge 2} (-1)^{|I'|} \left(\frac{p}{1-p}\right)^{|I'|/2} \wf{I'}\\
        &=^{(iii)} p \sum_{i\in I} \text{Inf}_{i}(f) - \sum_{I'\subseteq I, |I'|\ge 2} (-1)^{|I'|} \left(\frac{p}{1-p}\right)^{|I'|/2} \wf{I'}
    \end{align}
    where $(i)$ follows by separating out the size 0 and size 1 $I'$'s, $(ii)$ follows from \Cref{thm:bestlinear} and $(iii)$ follows from \Cref{prop:influencei}.

\end{proof}

We now show an upper bound on the residual term from the previous result in terms of the residual of a linear datamodels.

\begin{lemma}
\label{lem:residual_bound}
    The residual term of group influence minus sum of individual influences from \Cref{thm:group_influence} can be upper bounded as follows:
    \begin{align}
        \left|\sum_{I'\subseteq I, |I'|\ge 2} (-1)^{|I'|} \left(\frac{p}{1-p}\right)^{|I'|/2} \wf{I'}\right|
        &< (1-p)^{-|I|/2} \sqrt{B_{\ge 2}}
    \end{align}
    where $B_{\ge 2}$ is the residual of the best linear datamodel as defined in \Cref{eq:residual_expression}.
\end{lemma}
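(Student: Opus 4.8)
The plan is to obtain the bound by a single application of the Cauchy--Schwarz inequality, after viewing the residual as a weighted sum of Fourier coefficients. I would write the quantity to be bounded as $\sum_{I'} a_{I'} \wf{I'}$, where the weight $a_{I'} = (-1)^{|I'|} \left(\frac{p}{1-p}\right)^{|I'|/2}$ is attached to each $I' \subseteq I$ with $|I'| \ge 2$ (and $a_{I'} = 0$ otherwise). Cauchy--Schwarz then gives
\begin{align*}
    \left|\sum_{I'} a_{I'} \wf{I'}\right|
    \le \sqrt{\sum_{I'\subseteq I, |I'|\ge 2} a_{I'}^{2}} \;\cdot\; \sqrt{\sum_{I'\subseteq I, |I'|\ge 2} \wf{I'}^{2}},
\end{align*}
so it remains to control the two factors separately.

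For the second factor, the sum ranges only over subsets $I' \subseteq I$ of size at least $2$, which form a subcollection of all sets of size at least $2$; hence $\sum_{I'\subseteq I, |I'|\ge 2} \wf{I'}^{2} \le B_{\ge 2}$ by the definition in \Cref{eq:residual_expression}. For the first factor, I would group the weights by cardinality and apply the binomial theorem with $r = \frac{p}{1-p}$. Setting $m = |I|$, since $a_{I'}^{2} = r^{|I'|}$ we get
\begin{align*}
    \sum_{I'\subseteq I, |I'|\ge 2} a_{I'}^{2}
    = \sum_{k=2}^{m} \binom{m}{k} \left(\frac{p}{1-p}\right)^{k}
    < \sum_{k=0}^{m} \binom{m}{k} \left(\frac{p}{1-p}\right)^{k}
    = \left(\frac{1}{1-p}\right)^{m}
    = (1-p)^{-|I|},
\end{align*}
where the strict inequality is produced by dropping the strictly positive $k=0$ and $k=1$ terms. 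Taking square roots and multiplying the two bounds yields the claimed $(1-p)^{-|I|/2}\sqrt{B_{\ge 2}}$.

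The argument is essentially routine, so there is no serious obstacle; the only step requiring care is recognizing that $\left(1 + \tfrac{p}{1-p}\right)^{|I|} = (1-p)^{-|I|}$ collapses the binomial sum to the stated exponential factor. This identity is also the conceptually important point, since it is exactly this factor that forces the (unavoidable, per the next subsection) exponential-in-$|I|$ growth of the bound.
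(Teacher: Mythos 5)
Your proof is correct and follows essentially the same route as the paper's: a single Cauchy--Schwarz step, bounding the squared-weight sum by the full binomial sum $\left(1+\tfrac{p}{1-p}\right)^{|I|}=(1-p)^{-|I|}$ and the squared-coefficient sum by $B_{\ge 2}$. The only cosmetic difference is that you group the weights by cardinality before invoking the binomial theorem, whereas the paper sums over subsets directly; the content is identical.
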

\begin{proof}
    We use Cauchy-Schwarz inequality to upper bound this residual.
    \begin{align}
        &\left(\sum_{I'\subseteq I, |I'|\ge 2} (-1)^{|I'|} \left(\frac{p}{1-p}\right)^{|I'|/2} \wf{I'}\right)^{2}
        \le \left(\sum_{I'\subseteq I, |I'|\ge 2} \left(\frac{p}{1-p}\right)^{|I'|} \right) \left(\sum_{I'\subseteq I, |I'|\ge 2} \wf{I'}^{2}\right)\\
        &< \left(\sum_{I'\subseteq I} \left(\frac{p}{1-p}\right)^{|I'|} \right) \left(\sum_{I'\subseteq [N], |I'|\ge 2} \wf{I'}^{2}\right)\\
        &= \left(1 + \frac{p}{1-p}\right)^{|I|} B_{\ge 2}
        = (1-p)^{-|I|} B_{\ge 2}
    \end{align}
    This completes the proof.

\end{proof}

We now prove the exponential blow up in group influence from \Cref{lem:sigmoid_group_influence} (restated below).

\sigmoidgroupinfluence*
\begin{proof}
    Let $M = |A|$ and suppose $|A'| = \gamma |A|$ for a small constant $\gamma$.
    We study $\text{Inf}_{A'}(f)$ using its definition and for a general $\beta \in (0,1)$.
    The function of interest is $f(x) = \mathbbm{1}\{\text{avg}(x_{A}) > \beta$.
    Let $B(n, q)$ denote the binomial r.v. that is a sum of $n$ independent Bernoulli's with parameter $q$.
    We first note that we can rewrite the expected value of $f$ as follows:
    \begin{align}
        \E_{x}\left[f(x)\right]
        = \Pr\left(B(M, p) > \beta M\right)
    \end{align}
    This is because the only times $f(x)$ is 1 is when at least $\beta$ fraction of indices in $A$ are 1, which is precisely capture by the Binomial distribution tail probability.
    Similarly, we can argue that
    \begin{align}
        \E_{x}\left[f(x|_{A'\rightarrow -1})\right]
        = \Pr\left(B(M(1-\gamma), p) > \beta M\right)
    \end{align}
    The group influence can be calculated as follows:
    \begin{align}
        \text{Inf}_{A'}(f)
        &= \E_{x}\left[f(x)\right] - \E_{x}\left[f(x|_{A'\rightarrow -1})\right]\\
        &= \Pr(B(M, p) > \beta M) - \Pr(B(M(1-\gamma), p) > \beta M)\\
        &= \Pr(B(M(1-\gamma), p) \le \beta M) - \Pr(B(M, p) \le \beta M)\\
        &= \sum_{\beta'\in[\frac{1}{M},\frac{2}{M},\dots\beta]} \binom{M(1-\gamma)}{\beta' M} p^{\beta' M} (1-p)^{M(1-\gamma - \beta')}\\
        &~ ~ ~ - \sum_{\beta'\in[\frac{1}{M},\frac{2}{M},\dots\beta]} \binom{M}{\beta' M} p^{\beta' M} (1-p)^{M(1 - \beta')}
    \end{align}
    Define $c_{\beta}(\gamma) = \binom{M(1-\gamma)}{\beta M} p^{\beta M} (1-p)^{M(1-\gamma - \beta)}$.
    The the above expression reduces to
    \begin{align}
        \text{Inf}_{A'}(f) = \sum_{\beta'\in[\frac{1}{M},\frac{2}{M},\dots\beta]} \left(c_{\beta'}(\gamma) - c_{\beta'}(0)\right)\label{eq:group_influence_beta}.
    \end{align}
    Note that individual influences $\text{Inf}_{i}(f)$ corresponds sets of size $|A'| = 1$, i.e. $\gamma = \nicefrac{1}{M}$.
    So,
    \begin{align}
         \text{Inf}_{i}(f) = \sum_{\beta'\in[\frac{1}{M},\frac{2}{M},\dots\beta]} \left(c_{\beta'}(\nicefrac{1}{M}) - c_{\beta'}(0)\right)\label{eq:influencei_beta}.
    \end{align}
    We study the quantity $c_{\beta}(\gamma)$ in more detail.
    In particular, we use Stirling's approximation for binomial coefficients $\log_{2} \binom{n}{qn} \approx n H(q)$, where $H(q) = -q \log_{2} q - (1-q) \log_{2}(1-q)$ is the entropy function. 
    This gives a cleaner expression for $\log_{2}(c_{\beta}(\gamma))$.
    \begin{align}
        \log_{2}(c_{\beta}(\gamma))
        &= \log_{2}\binom{M(1-\gamma)}{\beta M} + \beta M \log_{2} (p) + (1 - \beta - \gamma) \log_{2} (1-p)\\
        &= M(1-\gamma) H\left(\frac{\beta}{1-\gamma}\right) + M \beta \log_{2}(p) + M (1 - \beta - \gamma) \log_{2}(1-p)
    \end{align}
    For a small $\gamma$, we use a linear approximation for $\log_{2}(c_{\beta}(\gamma))$ at $\gamma=0$.
    The derivate is
    \begin{align}
        \frac{d}{d\gamma} \log_{2}\left(c_{\beta}(\gamma)\right)|_{\gamma = 0}
        = M\left(\log_{2}\left(\frac{1-\beta}{\beta}\right) - H(\beta) - \log_{2}(1-p)\right) = g(\beta)
    \end{align}
    where $g(\beta)$ is a decreasing function of $\beta$.
    This gives us the approximation for $c_{\beta}(\gamma)$
    \begin{align}
        \log_{2}\left(c_{\beta}(\gamma)\right)
        &\approx \log_{2}\left(c_{\beta}(0)\right) + \gamma g(\beta)\\
        c_{\beta}(\gamma)
        &\approx c_{\beta}(0) 2^{\gamma g(\beta)}
    \end{align}
    We note that for $\beta' \le 0.5$ and $p > 1/2$, $g(\beta') > 0$ and so $c_{\beta'}(\gamma)$ is an increasing function in $\gamma$.
    For ratio of group to individual influence from \Cref{eq:group_influence_beta,eq:influencei_beta}, we consider the following expression for $\beta' \le \beta = 0.5$:
    \begin{align}
        \frac{c_{\beta'}(\gamma) - c_{\beta'}(0)}{c_{\beta'}(\nicefrac{1}{M}) - c_{\beta'}(0)}
        &\ge \frac{c_{\beta'}(\gamma)}{c_{\beta'}(\nicefrac{1}{M})}
        \approx 2^{(\gamma - \nicefrac{1}{M}) g(\beta')}
        \ge 2^{(\gamma - \nicefrac{1}{M}) g(0.5)}\\
        &= 2^{(\gamma - \nicefrac{1}{M}) M(-1 - \log(1-p))}
        = (2(1-p))^{-\gamma M + 1}\\
        &= (2(1-p))^{-|A'|+1} \label{eq:group_lower_bound}
    \end{align}
    Plugging this back into \Cref{eq:group_influence_beta,eq:influencei_beta}, we see that the terms for each $\beta'$ in the summation are upper bounded by \Cref{eq:group_lower_bound}.
    This gives
    \begin{align}
        \frac{\text{Inf}_{A'}(f)}{\text{Inf}_{i}(f)} \ge (2(1-p))^{-|A'|+1}
    \end{align}
    Considering the total individual influences instead of just influence for just 1 point completes the proof.
\end{proof}

\section{Experimental setup}
\label{sec:apx_experiments}

\looseness-1Experiments are conducted on the CIFAR-10 data to test the estimation procedure and the quality of the linear fit in \Cref{fig:margin_ns_res,fig:residual_plots}.
We use the FFCV library \citet{leclerc2022ffcv} to train models on CIFAR-10; each model takes $\sim$ 30s to train on our GPUs.
We first pick subset of 10k images from the CIFAR-10 training dataset.
Our models trained are then trained on sets of size $5000$ (corresponding to $p=0.5$), which achieve an average of $\sim$ 71\% accuracy on the CIFAR-10 test set.
For the noise stability estimates from \Cref{alg:residual_est}, we sample $600$ pairs of $\rho$-correlated datasets $(x, x')$ each for $\rho = 0.05, 0.1,$ and $0.2$.
We train 12,000 models for each setting of $\rho$, where there are $600$ distinct sets $x$ + 600 distinct $\rho$-correlated sets $x'$ chosen and the remaining $10 \times$ runs are due to running 10 random seeds per training set.
We use the default ResNet based architecture in FFCV with a batch size of 512, an initial learning rate of 0.5, 24 epochs, weight decay of $5e$-$4$ and SGD with momentum as the optimizer. 

\looseness-1For the experiment with residual estimation, we use the set of $\rho$'s to be $[0, 0.1, 0.2, 1]$ for most experiments, $[0, 0.05, 0.1, 1]$ for \Cref{fig:residual_plots_c} and $[0, 0.1, 0.2]$ for \Cref{fig:poly_fits_no1}.
Furthermore we use degree $d=2$ in \Cref{alg:residual_est} for most experiments and $d=3$ for comparison in \Cref{fig:residual_plots_b}.
Although the theoretical analysis in \Cref{thm:residual_est} does not use $\rho=1$ for the polynomial fitting, we find experimentally that adding $\rho=1$ gives more robust estimations.
This is evident from the polynomial fits obtained for 20 randomly selected test examples in \Cref{fig:poly_fits,fig:poly_fits_no1}, where the fits from  $[0, 0.1, 0.2, 1]$ are clearly better than those from $[0, 0.1, 0.2]$.
The theory can also be extended for this case, with a slightly modified analysis.

\ifthenelse{\boolean{arxiv}}
{
\begin{figure}[!t]
    \begin{center}
    \includegraphics[width=0.9\textwidth]{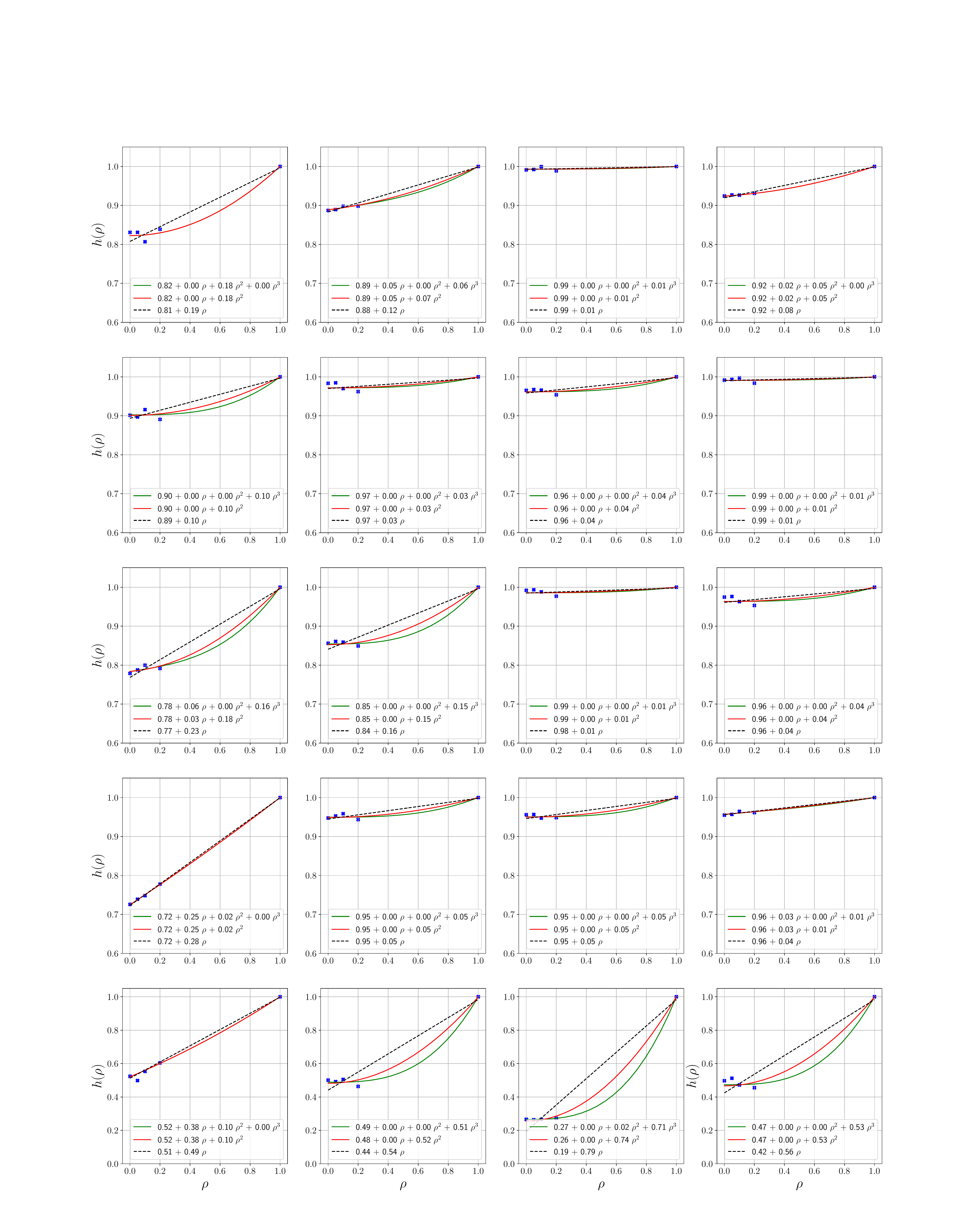}
    \end{center}
    \caption{Noise stability estimates plotted for 20 randomly sampled test examples.
    For each example (corresponding to a plot), we plot the estimated $\hat{h}(\rho)$ in blue dots and the best degree $d\in\{1, 2, 3\}$ (black, red and blue curves respectively) polynomial fits obtained from \Cref{alg:residual_est} for the list of $\rho$'s being $[0, 0.1, 0.2, 1]$.
    The test examples are rearranged such that the first 4 rows contain points with $\min_{\rho}\hat{h}(\rho)>0.7$ and the y-axis range is set to $[0.6, 1]$ for clarity of presentation.
    The y-axis range for the last row is set to $[0, 1]$. The estimated residual from \Cref{alg:residual_est} is precisely equal to one minus the sum of the constant and degree-1 coefficients.}
    \label{fig:poly_fits}
\end{figure}

\clearpage
}
{
\begin{figure}[!t]
    \begin{center}
    \includegraphics[width=1.\textwidth]{media/poly_fits.pdf}
    \end{center}
    \caption{Noise stability estimates plotted for 20 randomly sampled test examples.
    For each example (corresponding to a plot), we plot the estimated $\hat{h}(\rho)$ in blue dots and the best degree $d\in\{1, 2, 3\}$ (black, red and blue curves respectively) polynomial fits obtained from \Cref{alg:residual_est} for the list of $\rho$'s being $[0, 0.1, 0.2, 1]$.
    The test examples are rearranged such that the first 4 rows contain points with $\min_{\rho}\hat{h}(\rho)>0.7$ and the y-axis range is set to $[0.6, 1]$ for clarity of presentation.
    The y-axis range for the last row is set to $[0, 1]$. The estimated residual from \Cref{alg:residual_est} is precisely equal to one minus the sum of the constant and degree-1 coefficients.}
    \label{fig:poly_fits}
\end{figure}

\clearpage
}

\ifthenelse{\boolean{arxiv}}
{
\begin{figure}[!t]
    \begin{center}
    \includegraphics[width=0.9\textwidth]{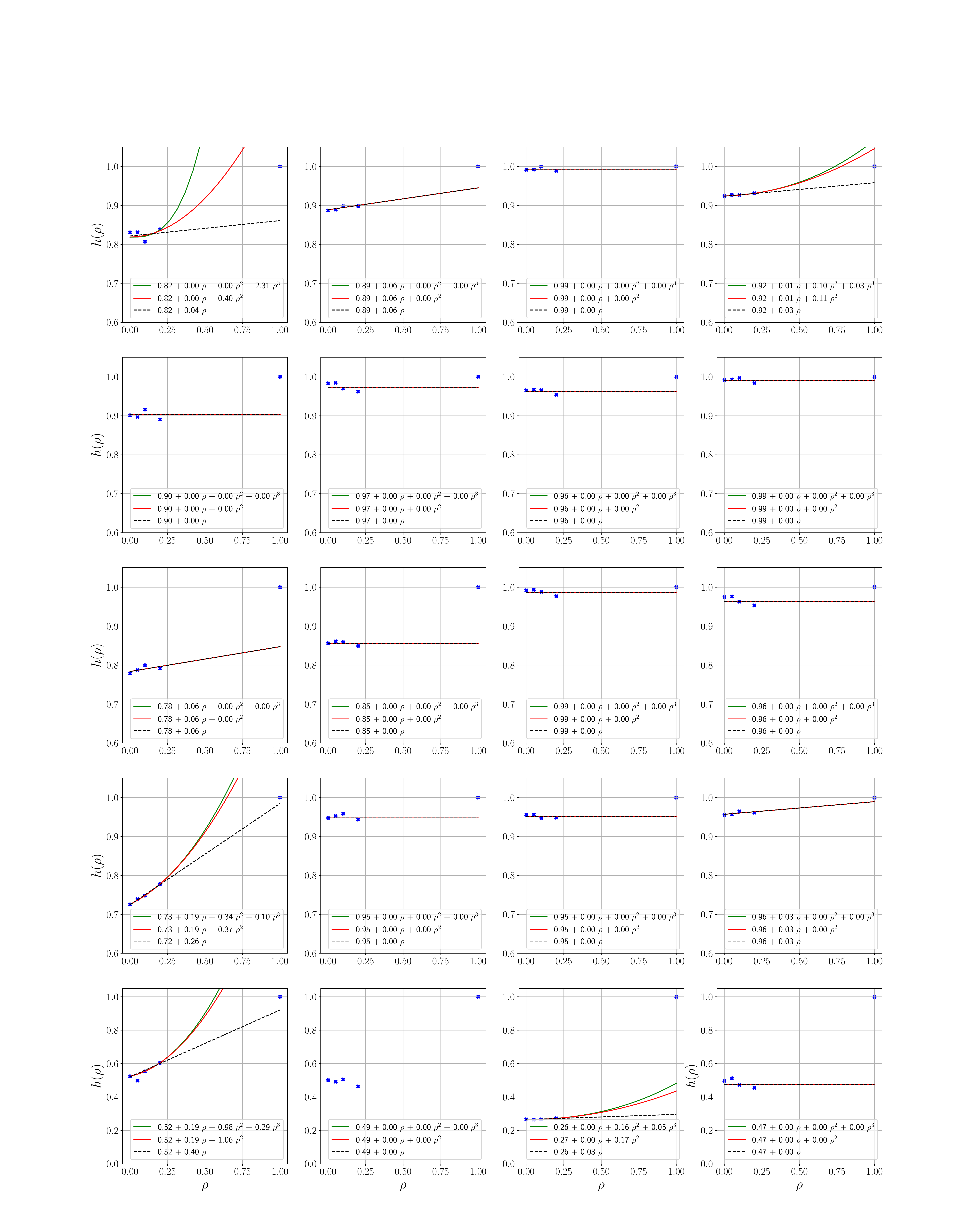}
    \end{center}
    \caption{Noise stability estimates plotted for 20 randomly sampled test examples.
    For each example (corresponding to a plot), we plot the estimated $\hat{h}(\rho)$ in blue dots and the best degree $d\in\{1, 2, 3\}$ (black, red and blue curves respectively) polynomial fits obtained from \Cref{alg:residual_est} for the list of $\rho$'s being $[0, 0.1, 0.2]$.
    The test examples are rearranged such that the first 4 rows contain points with $\min_{\rho}\hat{h}(\rho)>0.7$ and the y-axis range is set to $[0.6, 1]$ for clarity of presentation.
    The y-axis range for the last row is set to $[0, 1]$. The estimated residual from \Cref{alg:residual_est} is precisely equal to one minus the sum of the constant and degree-1 coefficients.}
    \label{fig:poly_fits_no1}
\end{figure}

\clearpage
}
{
\begin{figure}[!t]
    \begin{center}
    \includegraphics[width=1.\textwidth]{media/poly_fits_no1.pdf}
    \end{center}
    \caption{Noise stability estimates plotted for 20 randomly sampled test examples.
    For each example (corresponding to a plot), we plot the estimated $\hat{h}(\rho)$ in blue dots and the best degree $d\in\{1, 2, 3\}$ (black, red and blue curves respectively) polynomial fits obtained from \Cref{alg:residual_est} for the list of $\rho$'s being $[0, 0.1, 0.2]$.
    The test examples are rearranged such that the first 4 rows contain points with $\min_{\rho}\hat{h}(\rho)>0.7$ and the y-axis range is set to $[0.6, 1]$ for clarity of presentation.
    The y-axis range for the last row is set to $[0, 1]$. The estimated residual from \Cref{alg:residual_est} is precisely equal to one minus the sum of the constant and degree-1 coefficients.}
    \label{fig:poly_fits_no1}
\end{figure}

\clearpage
}

\end{document}